\documentclass[11pt,a4paper]{article}

\usepackage{url}
\usepackage{algorithm}
\usepackage{algorithmic}%
\usepackage{mathtools}
\usepackage{amsmath,amssymb,amsthm}
\usepackage{bbm}
\usepackage{graphicx}

\setcounter{tocdepth}{2}

\usepackage[comma,sort&compress]{natbib}

\usepackage{thmtools}

\declaretheorem[name=Theorem]{Thm}
\declaretheorem[within=section,name=Lemma]{Lem}
\declaretheorem[sibling=Lem,name=Definition]{Def}
\declaretheorem[sibling=Lem,name=Notation]{Not}
\declaretheorem[sibling=Lem,name=Proposition]{Prop}

\newcommand{\Spec}{\operatorname{Spec}}

\newcommand{\Tr}{\operatorname{Tr}}

\newcommand{\Euc}{\operatorname{Euc}}
\newcommand{\Trans}{\operatorname{T}}
\newcommand{\Orth}{\operatorname{O}}
\newcommand{\Unit}{\operatorname{U}}
\newcommand{\Pol}{\operatorname{Pol}}

\newcommand{\spn}{\operatorname{span}}

\newcommand{\calF}{\mathcal{F}}

\newcommand{\calI}{\mathcal{I}}

\newcommand{\ZZ}{\ensuremath{\mathbb{Z}}}
\newcommand{\RR}{\ensuremath{\mathbb{R}}}

\newcommand{\CC}{\ensuremath{\mathbb{C}}}

\newcommand{\KK}{\ensuremath{\mathbb{K}}}

\DeclareMathOperator*{\Id}{I}

\makeatletter
\providecommand*{\diff}%
        {\@ifnextchar^{\DIfF}{\DIfF^{}}}
\def\DIfF^#1{%
        \mathop{\mathrm{\mathstrut d}}%
                \nolimits^{#1}\gobblespace
}
\def\gobblespace{%
        \futurelet\diffarg\opspace}
\def\opspace{%
        \let\DiffSpace\!%
        \ifx\diffarg(%
                \let\DiffSpace\relax
        \else
                \ifx\diffarg\[%
                        \let\DiffSpace\relax
                \else
                        \ifx\diffarg\{%
                                \let\DiffSpace\relax
                        \fi\fi\fi\DiffSpace}
\makeatother

\usepackage{hyperref}

\usepackage{titlesec}
\titleformat{\section}
	{\normalfont\Large\bfseries\filcenter}{\thesection.}{1 ex}{}
\titleformat{\subsection}
	{\normalfont\normalsize\bfseries}{\thesubsection.}{1 ex}{}
\titleformat{\subsubsection}[runin]
	{\normalfont\normalsize\bfseries\filcenter}{\thesubsubsection.}{1 ex}{}

\usepackage[charter]{mathdesign}
\usepackage[mathcal]{eucal}

\usepackage[margin=1.2 in]{geometry}

\usepackage{authblk}

\begin{document}
\title{Learning with Algebraic Invariances, and\\
the Invariant Kernel Trick}

\author[1]{
Franz J.~Kir\'{a}ly
\thanks{\url{f.kiraly@ucl.ac.uk}}
}

\author[2]{ Andreas Ziehe
\thanks{\url{andreas.ziehe@tu-berlin.de}}
}

\author[2]{ Klaus-Robert M\"uller 
\thanks{\url{klaus-robert.mueller@tu-berlin.de}}
}

\affil[1]{
Department of Statistical Science,
University College London,\newline
Gower Street,
London WC1E 6BT, United Kingdom
}

\affil[2]{
Machine Learning Group,
Technische Universit\"at Berlin,\newline
Marchstrasse 23,
10587 Berlin, Germany
}

\date{}

\maketitle
\begin{abstract}
\begin{normalsize}
When solving data analysis problems it is important to integrate prior
   knowledge and/or structural invariances. This paper contributes by a novel framework for
   incorporating algebraic invariance structure into kernels. In
   particular, we show that algebraic properties such as sign symmetries
   in data, phase independence, scaling etc. can be included easily by
   essentially performing the kernel trick twice. We demonstrate the
   usefulness of our theory in simulations on selected applications such as
   sign-invariant spectral clustering and underdetermined ICA.
\end{normalsize}
\end{abstract}


\section{Introduction}\label{sec:intro}
The construction of algorithms that encompass problem specific
invariances has been an important line of research in pattern
recognition and machine learning. Properly incorporating invariances
into a model allows to constrain the underlying function class and
thus to increase on one hand generalization and on the other hand
resistance against outliers and robustness against nonstationarities
that are caused by the respective invariance transformations. In
principle, machine learning models could become invariant by learning
from large corpora of data that contain the respective transformations
with respect to which invariance needs to be achieved, e.g. ambient
noise in speech recognition \citep{speechref} or robustness
against translation, rotation or thickness transformations in
handwritten digit recognition, (e.g.~\cite{SimVicLeCDen92,LeNet,virtualSVMs,Jeb03b}). Two general lines of
research have enjoyed high popularity, kernel methods \citep{Vapnik95,BoserVapnik92,KPCA1998}, where kernels can be {\em
engineered} to reflect complex invariances or prior knowledge, e.g. in
Bioinformatics \citep{Zien,Jiang12}, or
(deep) neural networks where large amounts of data help to {\em learn}
a representation \citep{LeNet,Bengio,Hinton}.

We will contribute to kernel methods and address the fundamental
question how prior knowledge on invariances can be {\em directly}
incorporated into the kernel formalism. More specifically, the kernel
trick extracts features, and it is important to
ask how these kernel features can be made naturally invariant under
data-specific symmetries, such as sign change, mirror symmetry, common
complex phase factor, rotation, and so on.

In particular, we propose an algebraic method that we will call {\em
the invariant kernel trick}. It allows (i) a modification of any
(positive semi-definite) kernel into a suitable invariant kernel by (ii) applying the
kernel trick twice. Namely, properties of two kernels are combined --
one for the invariance, and one for the features. (iii) Invariant
kernels come without an increase in the computational cost of kernel
evaluation, and finally (iv) the derived kernel is canonical,
and shares fundamental properties with the original. The kernel
invariant trick can be readily applied to any kernel-based method and
naturally and immediately implements the desired algebraic invariance
structure.

To underline the versatility of our novel invariance inducing
framework, we exemplarily show sign-invariant clustering simulations
on toy data obtained by modifying the USPS data set, and sign- and scale invariant signal separation on the real world ``flutes'' data set. These experiments are primarily intended to illustrate ease of use, usefulness and the broad applicability of the
kernel invariant trick. While we provide the theoretical details of
many potential invariance inducing transformations, it is clearly
unfeasible to show simulations for all of them in combination with all
possible kernel algorithms, therefore we have focused on clustering, sign invariance, and sign-and-scale invariance.

In the next section we will lay out our theory on invariances, then
discuss the spectral clustering algorithm variant used in section 3
and finally provide experimental results and discussion.

\section{Learning with Algebraic Invariances}\label{sec:theorems}
\subsection{An invariant kernel example: clustering on the sphere}
 \label{sec:theorems.invexample}
 We first illustrate the main idea in a simple example. Suppose one is given data points in $\RR^n$, coming from two different classes. Suppose these classes form clusters up to sign - that is, for each data point $x\in \RR^n$ is considered equivalently to $-x$, e.g., if one is interested only in the span of the normal vector $x$. After flipping the sign for some of the points, they fall exactly in one of the clusters (see Figure \ref{fig:XOR}).

 \begin{figure}[ht]
 \begin{center}
    \includegraphics[width=0.75\linewidth]{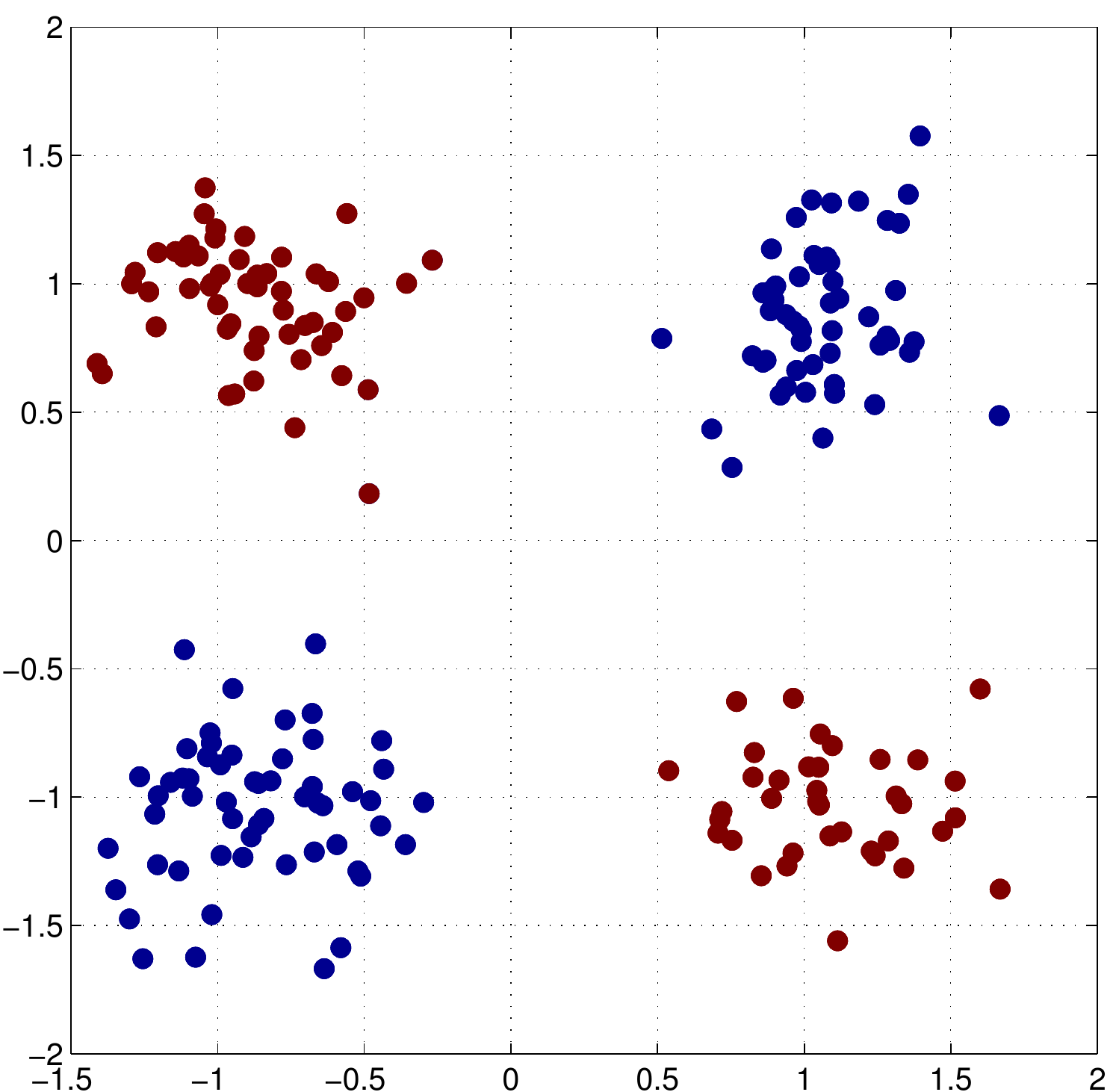}
 \end{center}
 \caption{The XOR dataset as a sign-invariant clustering problem.}\label{fig:XOR}
 \end{figure}

The first step is as follows: observe that for any $x\in\RR^n$, knowledge of the matrix $X=xx^\top\in\RR^{n\times n}$ is equivalent to knowledge of $x$ up to sign. Namely, $x$ can be obtained from $X$ by an eigenvector decomposition of $X$. One obtains $x=\pm v\cdot \|X\|$, where $v$ is the eigenvector of $X$ with biggest eigenvalue, and $\|X\|$ the Frobenius norm of $X$. Thus the matrix $X$ can be interpreted as a collection of invariant features.

On could now replace every data point $x$ by $xx^\top$, but through this maneuver the dimension of the data is roughly squared, thus the computational cost of learning in this representation increases considerably when compared to learning in the original ambient space. One may try to alleviate by computing random projections, e.g., clustering the vector of scalars $\Tr(x_ix_i^* A_j)$ for $i=1\dots M$ and $j=1\dots m$ for some $M< n^2$ and random/generic matrices $A_j\in\RR^{n\times n}$. But again, this is a heuristic step and can lead to inaccuracies and random fluctuations (even though it may work in some scenarios, and compressed-sensing-like guarantees can be derived for certain asymptotic settings).

We propose a different strategy which does not incur these problems. It combines the above invariant representation with the kernel trick and a kernel based learning method, for instance kernel $k$-means clustering. The typical kernel for clustering purposes is the Gaussian kernel
$$k(X,Y) = \exp\left(-\frac{\|X-Y\|^2}{2\sigma^2}\right).$$
The crucial second step is taking the kernel not with respect to the original data points $x$, but on the invariant features $X=xx^\top, Y=yy^\top$, obtaining a new kernel for $x$ and $y$ of the form
$$k_\pm(x,y):=k(X,Y)= \exp\left(-\frac{\|xx^\top-yy^\top\|^2}{2\sigma^2}\right),$$
where the norm is again the Frobenius norm on matrices. Applying the binomial expansion, one obtains
\begin{align}
 k_\pm(x,y) &=\exp\left(-\frac{\|x\|^4 + \|y\|^4-2\langle x,y\rangle^2}{2\sigma^2}\right).\label{eq:pgk}
 \end{align}
This is a sign-invariant variant of the Gaussian kernel which is indeed a (positive definite) kernel and none more expensive to evaluate than the original Gaussian kernel. It is further similar to the original by having an adaptable kernel width.

We will show and argue that these two steps are an instance of a very general and mathematically natural strategy for arbitrary kernels, which allows to construct modified invariant kernels for a large class of invariances. We will use the above example as a running illustration of the general strategy.

\subsection{Invariant kernels}
\label{sec:theorems.invariance}
We start by introducing an abstract and general {\bf setting} for invariances, which is as follows: we start with data points
$x_1,\dots, x_N\in \KK^n$, with $\KK = \RR$ or $\KK = \CC$, and an invariant group action (finite or continuous)
$$G\times \KK^n\rightarrow \KK^n,\quad (g,x)\mapsto g.x.$$
That is, an data point $x\in W$ is considered invariance-equivalent to the elements of the so-called orbit
$$G.x = \{g.x\;:\; g\in G\}.$$
We will assume (for technical reasons: ensuring existence of a well-defined quotient) that the group action is algebraic, that is, all maps $x\mapsto g.x$ are algebraic maps.
In the example presented in section~\ref{sec:theorems.invexample}, the group is the ``sign group'' $G=(\{+1,-1\},\cdot)$, with group operation $\cdot$ given by $+1=+1\cdot+1=-1\cdot-1$ and $-1 = +1\cdot-1 = -1 \cdot +1$, and group action given by $+1.x = x$ and $-1.x = -x$ (this group $G$ is, as a group, isomorphic to the group $\ZZ/2$).

Our main goal is to start from an arbitrary class of kernels $k:\KK^m\times \KK^m\rightarrow \RR$ (where the kernel is defined for any integer $m$ not necessary equal to $n$, such as for Gaussian or polynomial kernels) and make them invariant under the $G$-action. Mathematically, there are two meaningful but different ways for $k$ to be invariant:

\begin{Def}
Let $W\subseteq \KK^n$. Consider a group $G$ acting on $W$, and a function $k:W\times W\rightarrow \RR$. The function $k$ is called:
\begin{description}
\item[(i)] \emph{ $G$-invariant }, if $k(x,y)=k(g.x,h.y)$ for all $g,h\in G$ and all $x,y\in W$.
\item[(ii)] \emph{diagonally $G$-invariant }, if $k(x,y)=k(g.x,g.y)$ for all $g\in G$ and all $x,y\in W$.
\end{description}
If $k$ is a kernel, we similarly call it a $G$-invariant or diagonally $G$-invariant kernel.
\end{Def}

Note that $G$-invariant kernels are also diagonally $G$-invariant, but the converse is in general not true.

\subsection{Invariant kernels: characterization, existence and uniqueness}
Before constructing invariant kernels, we will characterize them abstractly. Similarly to the Moore-Aronszajn theorem, which asserts the existence of a unique feature space, a classical theorem of invariant theory asserts existence of a unique invariant space. We start by stating (technically convenient variants of) both theorems:

\begin{Thm}[Moore-Aronszajn]\label{Thm:Moore}
Let $W\subseteq \KK^n$ compact, let $k: W\times W\rightarrow \RR$ be a symmetric, positive definite kernel. Then there exist a $\KK$-Hilbert space $\calF$ and a continuous map $\phi: W\rightarrow \calF$, unique up to isomorphism, such that
$k(x,y) = \langle x,y\rangle_\calF\quad\mbox{for all}\; x,y\in W.$
\end{Thm}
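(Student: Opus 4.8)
The plan is to carry out the classical reproducing-kernel construction, adapted to the feature-map formulation stated here. First I would, for each $x\in W$, introduce the function $k_x\colon W\to\KK$ defined by $k_x(y)=k(x,y)$, and let $\calF_0=\spn\{k_x : x\in W\}$ be the $\KK$-vector space of their finite linear combinations. On $\calF_0$ I would define a symmetric (respectively Hermitian) bilinear form by setting $\langle k_x,k_y\rangle := k(x,y)$ and extending (sesqui)linearly; concretely $\langle \sum_i a_i k_{x_i},\sum_j b_j k_{y_j}\rangle=\sum_{i,j} a_i \overline{b_j}\, k(x_i,y_j)$. The assumption that $k$ is a positive definite kernel says exactly that every Gram matrix $(k(x_i,x_j))_{i,j}$ is positive semidefinite, which is precisely the statement that this form satisfies $\langle f,f\rangle\ge 0$ for all $f\in\calF_0$.

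Second, I would promote this positive semidefinite form to a genuine inner product via the reproducing identity. For $f=\sum_i a_i k_{x_i}\in\calF_0$ and any $x\in W$ one has $\langle f,k_x\rangle=\sum_i a_i k(x_i,x)=f(x)$. Combining this with the Cauchy--Schwarz inequality for semidefinite forms gives $|f(x)|^2=|\langle f,k_x\rangle|^2\le \langle f,f\rangle\,\langle k_x,k_x\rangle$, so that $\langle f,f\rangle=0$ forces $f\equiv 0$ as a function on $W$. Hence the form is positive definite on $\calF_0$, and I would take $\calF$ to be the metric completion of $(\calF_0,\langle\cdot,\cdot\rangle)$, a $\KK$-Hilbert space in which $\calF_0$ sits as a dense subspace. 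Setting $\phi(x):=k_x\in\calF$ yields $\langle \phi(x),\phi(y)\rangle_\calF=k(x,y)$ by construction, which is the required factorization.

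Third, continuity of $\phi$ would follow from $\|\phi(x)-\phi(x')\|_\calF^2=k(x,x)-2\operatorname{Re}k(x,x')+k(x',x')$ together with continuity of $k$ (which I take to be part of the standing hypotheses on a kernel over the compact set $W$); here compactness guarantees $\phi$ is moreover bounded and uniformly continuous. For uniqueness up to isomorphism I would fix the normalization $\calF=\overline{\spn\,\phi(W)}$ and argue as follows: if $(\calF',\phi')$ is a second such pair, the assignment $\phi(x)\mapsto\phi'(x)$ extends linearly to a map $\calF_0\to\calF'$ that preserves inner products, since $\langle \phi(x),\phi(y)\rangle_\calF=k(x,y)=\langle \phi'(x),\phi'(y)\rangle_{\calF'}$; being a densely defined isometry it extends uniquely to an isometric isomorphism $\calF\xrightarrow{\sim}\calF'$ intertwining the feature maps.

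I expect the only genuine subtleties, rather than a single hard obstacle, to be bookkeeping points: verifying that the bilinear form is well defined independently of the chosen representation of an element as a linear combination of the $k_x$ (this again reduces to the reproducing identity), being careful with complex conjugation in the sesquilinear case when $\KK=\CC$ so that symmetry of $k$ yields a Hermitian form, and pinning down the normalization $\calF=\overline{\spn\,\phi(W)}$ without which ``unique up to isomorphism'' fails, since one may always enlarge $\calF$ by an orthogonal summand on which $\phi$ has no image.
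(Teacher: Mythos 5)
Your construction is the standard Moore--Aronszajn argument and it is correct; note, however, that the paper itself gives no proof of this theorem --- it is quoted as a classical result (the appendix only proves Theorem~\ref{Thm:quot} and the lemmas on isometric invariants), so there is nothing internal to compare your route against. Your proof is the canonical one: pre-Hilbert space $\calF_0=\spn\{k_x\}$, the form $\langle k_x,k_y\rangle=k(x,y)$, well-definedness and positivity via the reproducing identity and Cauchy--Schwarz, completion, and $\phi(x)=k_x$. You also correctly repair three small defects in the statement as printed: the conclusion should read $k(x,y)=\langle\phi(x),\phi(y)\rangle_\calF$ rather than $\langle x,y\rangle_\calF$; continuity of $\phi$ requires continuity of $k$, which is not among the stated hypotheses (though it is assumed elsewhere in the paper, e.g.\ in Theorem~\ref{Thm:invkchar}); and ``unique up to isomorphism'' needs the normalization $\calF=\overline{\spn\,\phi(W)}$, since otherwise one can enlarge $\calF$ by an orthogonal summand. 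Flagging these explicitly, as you do, is exactly right. One further pedantic point worth a sentence in a final write-up: since the paper's $k$ takes values in $\RR$ but $\calF$ is allowed to be a $\KK$-Hilbert space with $\KK=\CC$, you should say which convention you adopt (a real-valued symmetric $k$ yields a real Hilbert space, which can then be complexified if a $\CC$-Hilbert space is wanted); your sesquilinear bookkeeping already accommodates this.
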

The Hilbert space $\calF$ is called \emph{reproducing kernel Hilbert space} (RKHS) or \emph{feature space} associated to $k$. The map $\phi$ is called the~\emph{feature map}.

\begin{Thm}[Universal property of group quotient]\label{Thm:quot}
Let $W\subseteq \KK^n$ compact, and $G$ a group acting algebraically on $W$.
Then, there exist a $\KK$-Hilbert space of invariants $\calI$ and an algebraic map $q: W\rightarrow \calI$, unique up to isomorphism, such that any $G$-invariant (piecewise) continuous map $\phi: W\rightarrow \calF$ to a Hilbert space $\calF$ admits a unique factorization $\phi = \phi' \circ q$ with $\phi':´W/G\rightarrow \calF$ continuous, where $W/G = q(W)$. If $\phi$ is algebraic, then so is $\phi'$.
\end{Thm}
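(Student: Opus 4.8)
The plan is to realize the pair $(\calI, q)$ explicitly as the categorical quotient coming from the algebra of $G$-invariant regular functions on $W$, and then to verify the claimed universal property by hand. This follows the classical construction of the invariant-theoretic (GIT) quotient, with the compactness of $W$ being exactly what promotes the purely algebraic statement into one about continuous maps. Concretely, I would first form the subalgebra $R^G\subseteq \KK[W]$ of those regular functions on $W$ that are constant on $G$-orbits. The one genuinely deep ingredient is a finiteness theorem for $R^G$: Noether's theorem when $G$ is finite, and more generally Hilbert's finiteness theorem for linearly reductive $G$, yielding finitely many generators $R^G=\KK[f_1,\dots,f_r]$. Setting $\calI := \KK^r$ (a finite-dimensional $\KK$-Hilbert space) and $q(x):=(f_1(x),\dots,f_r(x))$ produces an algebraic, $G$-invariant map, and I define $W/G := q(W)\subseteq\calI$.

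The heart of the argument is the observation that $q$ separates orbits, i.e.\ $q(x)=q(x')$ if and only if $x$ and $x'$ lie in the same (closed) orbit. For finite $G$ this is elementary: given two disjoint orbits one takes any regular function separating them and averages it over $G$ to manufacture a separating invariant. This separation property is precisely what lets an arbitrary orbit-constant map descend through $q$. Indeed, given a $G$-invariant continuous $\phi: W\to\calF$, invariance together with orbit separation immediately produce a \emph{unique} set-theoretic map $\phi'$ on $q(W)$ with $\phi=\phi'\circ q$.

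It then remains to establish regularity of $\phi'$, and this is where compactness does its work: $q$ is a continuous surjection from the compact space $W$ onto the Hausdorff space $W/G$, hence a closed map and therefore a topological quotient map. Consequently $\phi'$ is continuous precisely because $\phi=\phi'\circ q$ is, and the same quotient-map argument covers the piecewise-continuous case. If in addition $\phi$ is algebraic, then each of its (necessarily $G$-invariant) component functions lies in $R^G=\KK[f_1,\dots,f_r]$, i.e.\ is a polynomial in the $f_i$; this exhibits $\phi'$ as the restriction of a genuine polynomial map on $\calI$, hence algebraic. Finally, uniqueness of $(\calI,q)$ up to isomorphism is the standard consequence of an initial universal property: any two candidates each factor the other's structure map, and the resulting comparison maps are mutually inverse.

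The step I expect to be the main obstacle is the correct invocation of finite generation of $R^G$, together with the subtlety that for non-finite reductive $G$ the invariants separate only \emph{closed} orbits. This is exactly why $W/G$ must be defined as the image $q(W)$ rather than as the naive set of orbits, and it is the point where care is needed to keep the universal property honest.
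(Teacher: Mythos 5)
Your overall strategy---realize $(\calI,q)$ via the invariant ring $R^G$ and then descend continuity using that a continuous surjection from a compact space onto a Hausdorff space is closed, hence a topological quotient map---is close in spirit to the paper's proof, which also reduces everything to $R^G$ (via Proposition~\ref{Prop:invs}) but handles the analytic step by Stone--Weierstra\ss{} density of $\Pol(W,\KK)^G$ in $C(W,\KK)^G$ rather than by your closed-map argument. However, there are two genuine gaps. First, you invoke finite generation of $R^G$ (Noether for finite $G$, Hilbert for linearly reductive $G$) in order to set $\calI=\KK^r$; but the theorem is stated for an \emph{arbitrary} group acting algebraically, and for non-reductive infinite $G$ the invariant ring need not be finitely generated---the paper explicitly flags this by citing counterexamples to Hilbert's fourteenth problem, and accordingly claims only that $\calI$ is \emph{countably} generated in general (finite-dimensional when $G$ is finite). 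Your construction therefore proves a strictly weaker statement; to recover the theorem you would need to take a countable generating set of $R^G$ (which exists because $R=\KK[X_1,\dots,X_n]/\Id(W)$ has countable $\KK$-dimension) and embed $q$ into an infinite-dimensional Hilbert space, rescaling the generators---bounded on the compact $W$---so that the image lands in $\ell^2$.

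Second, your well-definedness argument for $\phi'$ rests on ``$q$ separates orbits,'' yet you concede in the same breath that for infinite reductive $G$ the invariants separate only \emph{closed} orbits. When a fiber of $q$ contains several orbits, $G$-invariance of $\phi$ alone does not force $\phi$ to be constant on that fiber, so the set-theoretic $\phi'$ is not automatically well defined. For continuous $\phi$ this can be repaired (an orbit-constant continuous function is constant on orbit closures, and every orbit in a fiber of $q$ contains the fiber's unique closed orbit in its closure), but for the merely piecewise continuous $\phi$ admitted by the statement this step genuinely fails without further argument. You correctly identify this as ``the point where care is needed,'' but the care is not supplied. The remaining ingredients---the averaging argument for finite $G$, the closed-map descent of continuity, the expression of invariant algebraic components as polynomials in the generators, and the standard uniqueness argument from the universal property---are sound.
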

Here, $W/G$ is called the \emph{quotient space} of $W$ w.r.t.~$G$, and is also called orbifold if $G$ is finite and $W$ is a differentiable manifold. The map $q$ is called the~\emph{quotient map} or~\emph{invariant map}. It encodes the invariants of $W$ with respect to the group action $G$, and Theorem~\ref{Thm:quot} asserts that it does so in a unique and canonical way.

If $G$ is finite, it can be shown that $\calI$ is as well. If $G$ is infinite, $\calI$ needs not to be of finite dimension, but it can be shown to be countable always, and finite for all $G$ in this manuscript. See the appendix for more details. Both theorems also admit more technical variants where $W$ can be more general (e.g.~Moore-Aronszajn for non-compact $W$, group quotient for non-compact $W$), which we omit for sake of exposition and readability.

The two theorems can immediately be combined into a corollary which characterizes invariant kernels:
\begin{Thm}\label{Thm:invkchar}
Let $W\subseteq \KK^n$ compact. Consider a group $G$ acting on $W$, and a let $k:W\times W\rightarrow \RR$ be a (piecewise) continuous symmetric positive definite kernel.
\begin{description}
\item[(i)] If $k$ is $G$-invariant, then there is a symmetric positive definite kernel $k':W/G\times W/G \rightarrow \RR$, such that $k(x,y) = k'(q(x),q(y))$, where $q:W\rightarrow W/G$ is the canonical quotient map. $k'$ is unique (up to isomorphism of $W/G$), the feature spaces associated to $k$ and $k'$ coincide, and for the respective feature maps $\phi,\phi'$, it holds that $\phi = \phi'\circ q$, thus $k(x,y) = \langle \phi'(q(x)),\phi'(q(y))\rangle_\calF$.
\item[(ii)] If $k$ is diagonally $G$-invariant, then there is a function $f:(W\times W)/G\rightarrow \RR$, such that $k(x,y) = f(Q(x,y))$, where $Q:W\times W\rightarrow (W\times W)/G$ is the canonical (diagonal) quotient map.
\end{description}
\end{Thm}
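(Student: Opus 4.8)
The plan is to deduce both parts by combining the Moore--Aronszajn theorem (Theorem~\ref{Thm:Moore}) with the universal property of the quotient (Theorem~\ref{Thm:quot}); the real content lies in part~(i), where I must show that the invariance of $k$ forces the canonical feature map to descend to the quotient, after which everything is formal. For part~(i) I would first invoke Theorem~\ref{Thm:Moore} to obtain a feature space $\calF$ and a continuous feature map $\phi\colon W\to\calF$ with $k(x,y)=\langle\phi(x),\phi(y)\rangle_\calF$, recalling that, since $\calF$ is characterized uniquely up to isomorphism, it is the minimal such space, i.e.\ the linear span of $\{\phi(x):x\in W\}$ is dense in $\calF$.

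The key step is to upgrade $G$-invariance of $k$ to $G$-invariance of $\phi$. Fix $g\in G$ and $x\in W$. Using $k(x,y)=k(g.x,y)$ (the case $h=\id$ of the hypothesis) gives $\langle\phi(x)-\phi(g.x),\phi(y)\rangle_\calF=0$ for every $y\in W$, so $\phi(x)-\phi(g.x)$ is orthogonal to the dense span above and therefore vanishes; thus $\phi(g.x)=\phi(x)$, i.e.\ $\phi$ is $G$-invariant. Since $\phi$ is moreover continuous, Theorem~\ref{Thm:quot} yields a unique continuous factorization $\phi=\phi'\circ q$ with $\phi'\colon W/G\to\calF$. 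I then set $k'(a,b):=\langle\phi'(a),\phi'(b)\rangle_\calF$ on $W/G$; being of the form $\langle\psi(a),\psi(b)\rangle_\calF$ for a map $\psi$ into a Hilbert space, it is symmetric and positive definite, and $k(x,y)=\langle\phi'(q(x)),\phi'(q(y))\rangle_\calF=k'(q(x),q(y))$ as required. Because $q$ is surjective onto $W/G=q(W)$, the set $\{\phi'(a):a\in W/G\}$ coincides with $\{\phi(x):x\in W\}$, so the two associated RKHS are completions of the same span and hence equal, with $\phi'$ the feature map of $k'$. Uniqueness of $k'$ follows from the same surjectivity: any $k''$ with $k(x,y)=k''(q(x),q(y))$ must agree with $k'$ on $q(W)\times q(W)=W/G\times W/G$.

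For part~(ii) I would apply Theorem~\ref{Thm:quot} directly on the product. The space $W\times W\subseteq\KK^{2n}$ is compact, and $G$ acts algebraically on it through the diagonal action $g.(x,y)=(g.x,g.y)$; diagonal $G$-invariance of $k$ says precisely that $k\colon W\times W\to\RR$ is invariant under this action. Viewing $\RR$ as a one-dimensional Hilbert space, Theorem~\ref{Thm:quot} then produces a unique continuous factorization $k=f\circ Q$, where $Q\colon W\times W\to(W\times W)/G$ is the diagonal quotient map and $f\colon(W\times W)/G\to\RR$, which is exactly the claimed statement.

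The main obstacle is the invariance-lifting step in part~(i): passing from the inner-product identity $\langle\phi(x)-\phi(g.x),\phi(y)\rangle_\calF=0$ for all $y$ to the pointwise equality $\phi(g.x)=\phi(x)$ relies on the feature vectors spanning a dense subspace of $\calF$, a property one must read off from the minimality built into the Moore--Aronszajn construction rather than from the kernel identity alone. Once this density is secured, both parts reduce to routine applications of the universal property.
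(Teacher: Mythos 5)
Your proof is correct and follows exactly the route the paper intends: it declares Theorem~\ref{Thm:invkchar} an immediate combination of Theorem~\ref{Thm:Moore} and Theorem~\ref{Thm:quot}, and you carry out that combination, supplying the one nontrivial detail the paper leaves implicit, namely that $G$-invariance of $k$ forces $\phi(g.x)=\phi(x)$ via density of $\operatorname{span}\{\phi(y)\}$ in the minimal feature space. Part~(ii) as a direct application of Theorem~\ref{Thm:quot} to the diagonal action on $W\times W$ with $\calF=\RR$ is likewise the intended argument.
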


It should be specifically noted that Theorem~\ref{Thm:invkchar} asserts a unique (up to isomorphism) decomposition of the kernel in an invariant part independent of $k$, given by $q$ resp.~$Q$, and a ``structural'' part, given by $k'$ resp.~$f$.

Some previous results on diagonally $G$-invariant kernels, including some statements on the corresponding feature spaces, can be already found in section~2.3 of~\cite{Haasdonk2007} and chapters~4.4 of~\cite{Kondor08}. We would like to note that they do not explicitly describe - therefore do not allow to explicitly construct - the invariant space in terms of the quotient map and its canonical invariants.

Intuitively, a $G$-invariant kernel is always a kernel \emph{on} the invariant features $q(x),q(y)$, while a diagonally $G$-invariant kernel is always a function in certain bivariate invariant functions $Q(x,y)$ (where in general $x$ and $y$ can not be separated).

Conversely, if one wants to construct kernels with certain invariance properties, the factorization Theorem~\ref{Thm:invkchar}~(ii) for diagonal $G$-invariances immediately implies that any positive definite function of the diagonal invariants $Q(x,y)$ will be diagonally invariant. Theorem~\ref{Thm:invkchar}~(i) for $G$-invariances implies that any class of kernels can be made into a $G$-invariant version by applying it to the features $q(x),q(y)$ - by the theorem, this procedure is canonical (up to isomorphism of the quotient $W/G$). The example in section~\ref{sec:theorems.invexample} constructs a $\ZZ/2$-invariant variant of the Gaussian kernel by observing that the quotient/invariant map is the map $q:x\mapsto xx^\top$.

\begin{Not}
In general, for a kernel $k:W\times W\rightarrow \RR$ and an invariant group action $G$, we will denote the $G$-invariant kernel canonically obtained by the procedure outlined above by $k_G$. If $k$ has a name, e.g.~Gaussian kernel, we call $k_G$ the $G$-invariant kernel of that name, e.g.~the $G$-invariant Gaussian kernel (keeping in mind that uniqueness holds only up to isomorphism of $W/G$).
\end{Not}

\subsection{The invariant kernel trick}
While the existence of $q$ or $Q$ is guaranteed by Theorem~\ref{Thm:invkchar}, the maps and associated invariant features can in principle be difficult or intractable to compute in practice - just as most of the common kernel features are difficult to obtain explicitly. The set of ideas outlined above is therefore only practically appealing when $q$ or $Q$ (or relevant parts thereof) are obtainable in (low-order) polynomial time, or in the following situation:

\begin{description}
\item[(a)] the original kernel $k:W\times W\rightarrow \RR$ is a function of Euclidean scalar products $k(x,y)=f(\langle x,x\rangle,\langle x,y\rangle, \langle y,y\rangle)$.
\item[(b)] the quotient/invariant map is the feature map of another efficiently computable kernel $\iota(x,y)$, that is, $\iota(x,y)=\langle q(x),q(y)\rangle$ where $q:W\rightarrow W/G$ is the quotient map.
\item[$\Rightarrow$] in this case, the $G$-invariant kernel may be obtained as $k_G(x,y) = f(\iota(x,x),\iota(x,y),\iota(y,y)).$
\end{description}
We call this the \emph{``invariant kernel trick''}, as the double application of the kernel trick allows us to avoid an explicit and potentially tedious computation of $q(x)$ and $q(y)$. In case of existence, we call $\iota$ the \emph{invariant kernel}. Note that in this case, $\calI = \mbox{span}\;W/G$ is the feature space associated to $\iota$.

Conditions (a) and (b) are fulfilled in section~\ref{sec:theorems.invexample}, but they may seem very special and a weak generalization of that example. This is, however, not true, as we argue in the following.

{\bf Condition (a)} is true for the majority of the more common kernels on $\KK^n$ in practical use, such as: Gaussian kernel, Laplace kernel and all other RBF kernels, homogenous and inhomogenous polynomial kernels, sigmoid kernel, ANOVA kernel. The only exceptions are kernels where a maximum/minimum is taken, such as spline and histogram kernels, and the more combinatorial kernels, e.g.~on strings and graphs, which in many cases do not take arguments in $\KK^n$. Moreover, the following result, which is an application of Theorem~\ref{Thm:invkchar}~(ii), implies that (a) is fulfilled for any kernel invariant under orthogonal/unitary, or Euclidean isometries; absence of such an invariance would imply an unparsimonious imbalancedness on the input representation.

\begin{Lem}\label{Lem:diagonalorth}
Let $k:W\times W\rightarrow \RR, W\subseteq \KK^n$ be a (piecewise) continuous function which is diagonally invariant under orthogonal/unitary transform, that is,~w.r.t.~the canonical group action of the orthogonal/unitary matrices $O(n)$ resp.~$U(n)$ on $\KK^n$.\\
Then, there is a function $f:\KK^3\rightarrow \RR$ such that $k(x,y)=f(\langle x,x\rangle,\langle x,y\rangle, \langle y,y\rangle)$.
\end{Lem}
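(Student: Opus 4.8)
The plan is to show that the three scalar products separate the orbits of the diagonal action, so that $k$, being constant on orbits, descends to a function of those invariants. Consider the map
$$\Phi: W\times W\to \KK^3,\qquad \Phi(x,y)=\bigl(\langle x,x\rangle,\langle x,y\rangle,\langle y,y\rangle\bigr).$$
Each coordinate of $\Phi$ is invariant under the diagonal action of $\Orth(n)$ (resp.\ $\Unit(n)$), since these groups preserve the inner product; this is the easy direction. The lemma then follows once I prove the converse separation property: whenever $\Phi(x,y)=\Phi(x',y')$ for two pairs in $W\times W$, there is a single $g$ in the group with $g.x=x'$ and $g.y=y'$. Granting this, diagonal invariance gives $k(x,y)=k(g.x,g.y)=k(x',y')$, so $k$ is constant on the fibres of $\Phi$; defining $f$ on $\Phi(W\times W)$ by $f(\Phi(x,y)):=k(x,y)$ and extending arbitrarily to the rest of $\KK^3$ yields the factorization $k(x,y)=f(\langle x,x\rangle,\langle x,y\rangle,\langle y,y\rangle)$. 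This separation statement is precisely Theorem~\ref{Thm:invkchar}~(ii) made explicit: it identifies the quotient $(W\times W)/G$ with the image of $\Phi$.

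To establish the separation property I would argue geometrically, first building a linear isometry between the two spans. If $x,y$ are linearly independent, the assignment $x\mapsto x',\ y\mapsto y'$ extends by linearity to a map $u:\spn\{x,y\}\to\spn\{x',y'\}$, and equality of the Gram matrices shows that $u$ preserves inner products and in particular is injective, hence an isometry onto its image. If $x,y$ are linearly dependent, say $y=\lambda x$ with $x\neq 0$, then a short computation of $\langle y'-\lambda x',\,y'-\lambda x'\rangle$ via the Gram equalities shows this quantity vanishes, so $y'=\lambda x'$ by positive-definiteness, and one defines $u$ on the line $\spn\{x\}$; the remaining degenerate subcases, where $x=0$, are handled in the same way. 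In every case one obtains an inner-product-preserving linear bijection $u$ between $\spn\{x,y\}$ and $\spn\{x',y'\}$ sending the first pair to the second.

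The main obstacle, and the only step with genuine content beyond bookkeeping, will be extending this partial isometry $u$ to a global element $g\in\Orth(n)$ (resp.\ $\Unit(n)$) of the ambient space. Here I would invoke the extension of isometries between subspaces of a finite-dimensional space carrying a positive-definite form: $u$ carries an orthonormal basis of $\spn\{x,y\}$ to an orthonormal set spanning $\spn\{x',y'\}$, and completing both to orthonormal bases of $\KK^n$ and mapping one completion to the other produces an orthogonal (resp.\ unitary) $g$ extending $u$, whence $g.x=x'$ and $g.y=y'$. This is the only place where the specific structure of $\Orth(n)/\Unit(n)$ and positive-definiteness enter; it is the classical fact (Witt's theorem for definite forms) that the orthogonal/unitary group acts transitively on configurations with a prescribed Gram matrix. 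Over $\CC$ I would note in passing that $\langle x,x\rangle$ and $\langle y,y\rangle$ are automatically real and $\langle y,x\rangle=\overline{\langle x,y\rangle}$, so the three listed quantities indeed determine the full Gram data and the Hermitian form of the extension argument applies verbatim.
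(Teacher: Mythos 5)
Your proof is correct, but it takes a genuinely different route from the paper's. The paper proves this lemma by combining two pieces of its general machinery: Theorem~\ref{Thm:invkchar}~(ii), which factors any diagonally $G$-invariant kernel through the canonical quotient map $Q:W\times W\to (W\times W)/G$, and Lemma~\ref{Lem:diagonalrot} in the appendix, which identifies that quotient map for the diagonal $\Orth(n)$/$\Unit(n)$ action with the Gram map $(x,y)\mapsto(\langle x,x\rangle,\langle x,y\rangle,\langle y,y\rangle)$ --- and the latter identification is not proved from scratch but cited (Lemma~6.1.1 of the reference given there, essentially the first fundamental theorem of invariant theory for the orthogonal/unitary group). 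You instead prove the orbit-separation property directly: equal Gram data yields a partial isometry $\spn\{x,y\}\to\spn\{x',y'\}$, which extends to a global element of $\Orth(n)$ resp.\ $\Unit(n)$ by completing orthonormal bases (Witt's theorem for definite forms), and then $k$ descends to the image of $\Phi$ purely set-theoretically. What each approach buys: the paper's route is a two-line corollary once its quotient framework and the cited invariant-theory result are in place, and it situates the lemma inside the general characterization of invariant kernels; your route is self-contained and more elementary, avoiding both the external citation and the Stone--Weierstrass/algebro-geometric apparatus behind Theorem~\ref{Thm:invkchar}, and it does not even use the continuity hypothesis on $k$ (which the lemma's conclusion indeed does not require, since $f$ is only asserted to be a function). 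Your degenerate-case analysis ($y=\lambda x$, $x=0$) and the remark that the three listed quantities determine the full Hermitian Gram matrix over $\CC$ are both correct and necessary; no gaps.
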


\begin{Lem}\label{Lem:diagonalEuc}
Let $k:W\times W\rightarrow \RR, W\subseteq \KK^n$ be a (piecewise) continuous function which is diagonally invariant under the Euclidean isometries $\mbox{Euc}(n)$ on $\KK^n$.\\
Then, there is a function $f:\RR\rightarrow \RR$ such that $k(x,y)=f(\|x-y\|)$.
\end{Lem}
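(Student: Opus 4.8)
The plan is to use that the Euclidean group acts transitively on pairs of points at a fixed distance, so that diagonal invariance forces $k$ to factor through the single invariant $\|x-y\|$. Note that $\Orth(n)$ (resp.\ $\Unit(n)$) embeds into $\Euc(n)$ as the isometries fixing the origin, so $k$ is in particular diagonally orthogonal/unitary invariant; one could therefore first apply Lemma~\ref{Lem:diagonalorth} to write $k(x,y)=F(\langle x,x\rangle,\langle x,y\rangle,\langle y,y\rangle)$ and then impose translation invariance. It is cleaner, however, to prove the transitivity statement directly and read off $f$ in one step.

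First I would establish the geometric claim: given $x,y,x',y'\in\KK^n$ with $\|x-y\|=\|x'-y'\|=:d$, there is $g\in\Euc(n)$ with $g.x=x'$ and $g.y=y'$. Every Euclidean isometry has the form $z\mapsto Az+t$ with $A\in\Orth(n)$ (resp.\ $\Unit(n)$), and $\Orth(n)$ (resp.\ $\Unit(n)$) acts transitively on each sphere $\{z:\|z\|=d\}$. Choosing $A$ with $A(y-x)=y'-x'$ and setting $g.z=A(z-x)+x'$ gives $g.x=x'$ and $g.y=A(y-x)+x'=(y'-x')+x'=y'$, proving the claim.

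With this in hand the conclusion is immediate: diagonal invariance yields $k(x,y)=k(g.x,g.y)=k(x',y')$ whenever $\|x-y\|=\|x'-y'\|$, so the value of $k$ on a pair depends only on the distance between its arguments. I would then define $f:\RR\rightarrow\RR$ by $f(d):=k(x,y)$ for any pair realizing $\|x-y\|=d$ (and arbitrarily for distances not attained on $W$); well-definedness is exactly the transitivity just proved, and $k(x,y)=f(\|x-y\|)$ holds by construction. Continuity of $f$ on the attained distances, if desired, is inherited from that of $k$.

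The points needing the most care are not computational but interpretational. In the complex case one must fix what $\Euc(n)$ means on $\KK^n=\CC^n$ --- namely isometries built from $\Unit(n)$ and the Hermitian norm $\|z\|^2=\langle z,z\rangle$ --- and verify that $\Unit(n)$ is transitive on complex spheres, which it is. Secondly, because $\Euc(n)$ contains all translations, the diagonal action preserves $W$ only when $W=\KK^n$; so the statement is to be read either with $W$ the whole space or with $k$ understood on the relevant orbits, a technical variant the paper elsewhere states it omits for readability.
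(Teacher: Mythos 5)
Your argument is correct, and it is essentially the ``alternative proof'' that the paper itself only sketches in the appendix: you show directly that the diagonal action of $\Euc(n)$ is transitive on pairs at a fixed distance and then define $f$ by picking orbit representatives. The paper's official route is different in structure: it first computes the diagonal quotient map of $\Euc(n)$ algebraically (Lemma~\ref{Lem:diagonalrottrans}), by staging the quotient through the normal subgroup of translations --- $(x,y)\mapsto x-y$ via Lemma~\ref{Lem:diagonaltrans}, followed by $v\mapsto\langle v,v\rangle$ via Lemma~\ref{Lem:rot} --- and then invokes the general factorization result Theorem~\ref{Thm:invkchar}~(ii) to conclude $k=f\circ Q$ with $Q(x,y)=\|x-y\|^2$. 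What the paper's route buys is that it exhibits $\|x-y\|^2$ as the \emph{canonical algebraic} invariant (so the factorization comes with the uniqueness and regularity guarantees of the quotient formalism, and slots into the general machinery used for all the other invariances in Section~\ref{sec:theorems.list}); what your route buys is complete self-containedness --- no quotient varieties, no Stone--Weierstra\ss, no universal property --- at the price of obtaining only a set-theoretic factorization, with continuity of $f$ recovered a posteriori as you note. Your two caveats (the meaning of $\Euc(n)$ over $\CC$ as generated by $\Unit(n)$ and translations, and the fact that translations do not preserve a proper subset $W$) are both legitimate and consistent with how the paper itself glosses over these technicalities.
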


The converses of these Lemmas are easily seen to hold. Proofs of Lemmas~\ref{Lem:diagonalorth} and~\ref{Lem:diagonalEuc} follow from Lemmas~\ref{Lem:diagonalrot} and~\ref{Lem:diagonalrottrans} in the appendix, together with Theorem~\ref{Thm:invkchar}~(ii).

Lemma~\ref{Lem:diagonalEuc} states that isometry-invariant kernels can always be expressed as a radial basis function kernels $k(x,y) = f(\|x-y\|)$. Note that the isometry-invariant kernels are also unitary invariant, thus both fulfill condition (a) by Lemma~\ref{Lem:diagonalorth}; more explicitly, note that $\|x-y\| = \sqrt{\langle x,x\rangle -\langle x,y\rangle - \langle y,x\rangle + \langle y,y\rangle},$ thus RBF kernels fulfill condition (a).

{\bf Condition (b)}, on the other hand, is fulfilled for a variety of simple invariances, which we non-exhaustively list in the following section.

\subsection{Invariant maps and invariant kernels}
\label{sec:theorems.list}
We proceed by deriving invariant kernels for a list of common invariances. For exposition, we will explicitly write down the invariant versions of the Euclidean scalar product $k^E(x,y) = \langle x,y\rangle$, the inhomogenous polynomial kernel $k^\pi(x,y) = \left(\langle x,y\rangle+1\right)^d$, and the Gaussian kernel $k^\gamma(x,y) = \exp\left(-\frac{\|x-y\|^2}{2\sigma^2}\right)$.

\subsubsection{Finite rotation invariance}
This is a slight generalization of the running example from section~\ref{sec:theorems.invexample}: the group is $G=\ZZ/m$ acting on $\KK^n$, with the action being $\ell.x = \zeta^\ell_m x$, where $\zeta_m$ is a complex $m$-th root of unity. As $\zeta_2 = -1$, the case $m=2$ is sign invariance. For general $m$, the action rotates each coordinate of $x$ in the complex plane by an angle of $2\pi/m$. One can show: the invariant map is $q: x\mapsto x^{\otimes m}$, where $x^{\otimes m}$ is the $m$-th outer product tensor of $x$. The invariant space is the vector space of symmetric tensors of degree $m$.

An elementary computation shows that there is an invariant kernel $\iota(x,y) = \left\langle x^{\otimes m},y^{\otimes m}\right\rangle =\langle x,y\rangle^m$, which coincides with $k^E_G$ and the homogenous polynomial kernel of degree $m$. Note that the invariant map $q$ needs not to be evaluated in computation of the invariant kernel $\iota$. Using the invariant kernel trick, one further obtains $k^\pi_G(x,y) = \left(\langle x,y\rangle^m+1\right)^d$ and $k^\gamma_G(x,y) = \exp\left(-\frac{\|x\|^{2m} + \|y\|^{2m}-\langle x,y\rangle^m-\langle y,x\rangle^m}{2\sigma^2}\right)$.

\subsubsection{Phase invariance}
Phase invariance means that the data are in $\CC^n$, with $v\equiv \exp \left(2\pi i\cdot \varphi\right) v$ for any $\varphi\in \RR$ (and no further equivalences). The group is $G=\RR/\ZZ$, acting on $\CC^n$ as $\varphi.v = \exp \left(2 \pi i\cdot \varphi \right)\cdot v$. One can show that the quotient map is $w: v\mapsto vv^*,$ where $v^*$ denotes Hermite transpose of $v$. This leads to the invariant kernel $\iota(x,y)=\langle x,y\rangle\cdot \langle y,x\rangle$. The phase invariant kernels are in complete analogy to the sign invariant kernels.

\subsubsection{Scale invariance}
For data in $\RR^n/\{0\}$, scale invariance or projective invariance means that two data points are considered the same if they differ by a common scale factor, i.e., $v\equiv \alpha v$ for any $\alpha\in \RR^+$. The group is (additive) $G=\RR$, with the (multiplicative) action $\alpha.v = \exp (\alpha) \cdot v$. One shows that the quotient map is $q: x\mapsto \frac{x}{\|x\|}$, obtaining a scalar product $\iota(x,y) = \frac{\langle x,y\rangle}{\|x\|\|y\|}$. Interestingly, this is indeed a positive definite kernel, related to the so-called correlation kernel of~\cite{Jiang12}. The invariant kernel trick yields $k^E_G = \iota,$ $k^\pi_G(x,y) = \left(\|x\|^{-1}\|y\|^{-1} \langle x,y\rangle+1\right)^d$ and $k^\gamma_G(x,y) = \exp\left(\frac{\langle x,y\rangle}{\sigma^2\|x\|\|y\|}-\sigma^{-2}\right)$.

\subsubsection{Scale and sign/phase invariance or projective invariance}
The scale invariant feature map still leaves a sign/phase ambiguity. Having sign and scale invariance means $v\equiv \alpha v$ for any $\alpha\in\KK\setminus\{0\}$, the group is $G=\KK/\{0\}$ with action $\alpha.v = \alpha \cdot v$. One way to obtain the invariant kernels is via direct computation of the invariants - or one can apply the invariant kernel trick twice, applying either scale invariance to the sign/phase invariant kernel or sign/phase invariance to the scale invariant kernels. Either way, one finds the scale and sign/phase invariant kernel $\iota(x,y) = \frac{\langle x,y\rangle\cdot \langle y,x\rangle}{\langle x,x\rangle \cdot \langle y,y\rangle}$. Applying the invariant kernel trick with this invariant kernel (thus effectively the original kernel trick thrice) yields $k^E_G = \iota,$ $k^\pi_G(x,y) = \left(\|x\|^{-2}\|y\|^{-2} |\langle x,y\rangle| ^2+1\right)^d$ and $k^\gamma_G(x,y) = \exp\left(\frac{\langle x,y\rangle \langle x,y\rangle}{\sigma^2\|x\|^2\|y\|^2}-\sigma^{-2}\right)$.

\subsubsection{Multiple invariances}
In general, as demonstrated in the example of scale plus sign/phase invariance one can combine multiple invariances by repeating the invariant kernel trick with different invariances. If actions are compatible, the sequence of groups $G_1, G_2, \dots, G_k$ does not matter, as the final quotient will be the quotient of the group generated by the $G_i$. Care should be taken only for invariances where the group action has a different presentation, as the equivalence relation can trivialize, thus mapping all points to a single one in the quotient.

\subsubsection{Matrix invariances}
Another class of invariances which is practically relevant are invariances of matrix groups acting on $\KK^{mr} \cong \KK^{m\times r}$, for example the group of $(r\times r)$ permutation matrices or unitary matrices acting as $U.X = XU$. Permutation invariances give rise to bag-of-features representations, while unitary action is related to the empirical moments of the data rows. Row-wise translation is $\KK^m$ acting on $\KK^{m\times r}$ through $\mu.X = X+ \mathbbm{1}\mu^\top$ with $\mathbbm{1}$ being an $m$-vector of ones, leading to centered invariant kernels.

As quotient spaces for matrix invariances are more difficult to characterize than the ones presented, and so are the invariant kernels, we postpone the presentation to a longer version of this report.

\subsection{Related concepts and literature}

Both $G$-invariant and diagonally $G$-invariant kernels have been to our knowledge first defined and studied by~\cite{Haasdonk2007}, where they are called \emph{totally invariant} and \emph{simultaneously invariant} (as naming is not consistent throughout literature, we have decided to adopt a notation more closely inspired by invariant theory). Proposition~7 in~\citep{Haasdonk2007} asserts existence of the invariant feature space for $G$-invariant kernels. The mathematical concepts described in the paper differ from our approach in our opinion by being less general and less explicit: (a) The two main contributions in~\citep{Haasdonk2007} are two specific kernels which are $G$-invariant the translation integration (TI) kernel, and the invariant distance substitution (IDS) kernel. The TI kernel requires a Lie group endowed with Haar measure, and is is given in terms of an invariant Haar integral; the IDS kernel on first sight looks similar to our invariant kernel trick, but instead of applying the kernel trick twice, IDS kernels require a metricized group and substitutes hard-to-compute infimal distances into an RBF kernel; the IDS-type generalization of inner product further requires the choice of an (possibly arbitrary) origin. (b) In~\citep{Haasdonk2007}, the existence of an $G$-invariant feature space is proven, but no further explicit statements on its structure are made. In particular, a result such as Theorem~\ref{Thm:invkchar} which explicitly describes the form of kernels in terms of canonical invariants, given by the quotient map, is not available. On the other hand, our results not only show that the invariant kernel trick is universal, but also that any TI and IDS kernels as constructed in~\citep{Haasdonk2007} must obey the specific form of Theorem~\ref{Thm:invkchar}, implying that they could also be obtained by the invariant kernel trick, potentially avoiding computation of integrals and infimal distances on Lie groups.

One further related reference is~\citep{Kondor08}, chapter~4.4 and following, where \emph{diagonally} $G$-invariant kernels are studied, which behave quite different from $G$-invariant kernels (see above). Definition~4.4.1 refers to a definition of diagonal $G$-invariance in the reference [Krein, 1950], which we were not able to obtain at time of submission. Theorem~4.4.3 is a direct application of the Reynolds formula to diagonal invariance - it implicitly leads to the definition of a (non-diagonal) $G$-invariant kernel $k^G$. As we understand, $k^G$ is defined only for finite $G$ and furthermore needs a diagonally $G$-invariant kernel (which can be more difficult to obtain) to construct a $G$-invariant kernel. Moreover, $k^G$ is not used in the remainder of the manuscript, as the intention of 4.4.3~seems to be the relation of what in our terminology would be the diagonally $G$-invariant kernel and the feature space of the (non-diagonally) $G$-invariant kernel.

There is some further literature on diagonally $G$-invariant kernels, see e.g.~\citep{walder2007learning}. As $G$-invariant kernels are diagonally $G$-invariant, the respective results also hold for $G$-invariant kernels.

There is also an approach outlined by~\cite{chapelle2001incorporating} which aims at incorporating invariances with differential methods. However, the invariances discussed there are motivated by vertical/horizontal translation of a pixel image encoded as an element of $\RR^{m\times n}$, which is \emph{not} an invariance in our sense, since translation by pixels does not give rise to a group action on $\RR^{m\times n}$. Therefore our findings on the constructions of $G$-invariant kernels do not contradict the introductory statement of~\cite{chapelle2001incorporating} that globally invariant kernels do not exist, simply because the concept of invariance in~\citep{chapelle2001incorporating} is different and specific to the pixels image application, as~\cite{walder2007learning} already correctly note in their introduction.

\section{Algorithms}\label{sec:algorithms}

It goes without saying that the proposed invariant kernels can be used
in {\em any} kernel-based learning algorithm, see \citep{KPCA1998,Scholkopf02}. In order to
study the performance of our method we focus on a particular
unsupervised learning technique: kernel-based spectral clustering.

\subsection*{Kernel-based spectral clustering}
Spectral clustering methods have been pioneered by \cite{MeilaS00,Ng02}
   and have been studied and extended by
 e.g.~\cite{Dhillon04,Perona2005,Luxburg07,Sugiyama2014}.

In recent work it has been shown that kernel entropy component
analysis (kernel ECA) can be used to cluster data \citep{KECA2010}.
As kernel PCA, kernel ECA is based on an eigen decomposition of the
kernel matrix $K$ but identifies those kernel principal axes that
contribute most to the R\'enyi entropy estimate of the data.  It has
been demonstrated that selected kernel ECA components provide
information about the clustering structure of the data by analysing
the eigenvalues and the eigenvectors of the kernel matrix \citep{KECA2010}.

In order to demonstrate that any existing kernel algorithm can in principle profit from the concepts introduced above, we will in the following use the kernel-based spectral clustering algorithm described in~\citep{KECA2010} with Gaussian kernels invariant w.r.t.~different symmetries.

\section{Experiments}\label{sec:experiments}

In section \ref{sec:theorems.invexample} we introduced a novel class
of Mercer kernels that are invariant to sign flips of the input data.

As an example, in this section we consider the problem of spectral
clustering under sign invariances.  For our experiments we used the
MATLAB implementation of kernel ECA by Robert
Jenssen \footnote{http://ansatte.uit.no/robert.jenssen/software.html}
and our MATLAB implementation of the sign-invariant Gaussian kernel in
Eq. (\ref{eq:pgk}).

\subsection{Spectral clustering of handwritten digits}

In order to illustrate clustering under sign invariance we choose
98 images of handwritten zeros and ones from the USPS dataset \citep{USPSdata} where we randomly switched the signs of the  $16\times16$ pixel images. The input data consists of $98$ vectors in $256$ dimensions. We apply spectral clustering using kernel ECA with (a) a standard Gaussian kernel and (b) the sign-invariant Gaussian Kernel.
The results in figure \ref{fig:digits} show
that in the invariant case  a successful grouping into groups of zeros and ones, respectively, occurs, i.e.~ignoring the black vs white groups caused by random sign flips.

\begin{figure}[ht]
\begin{center}
   \includegraphics[width=0.70\linewidth]{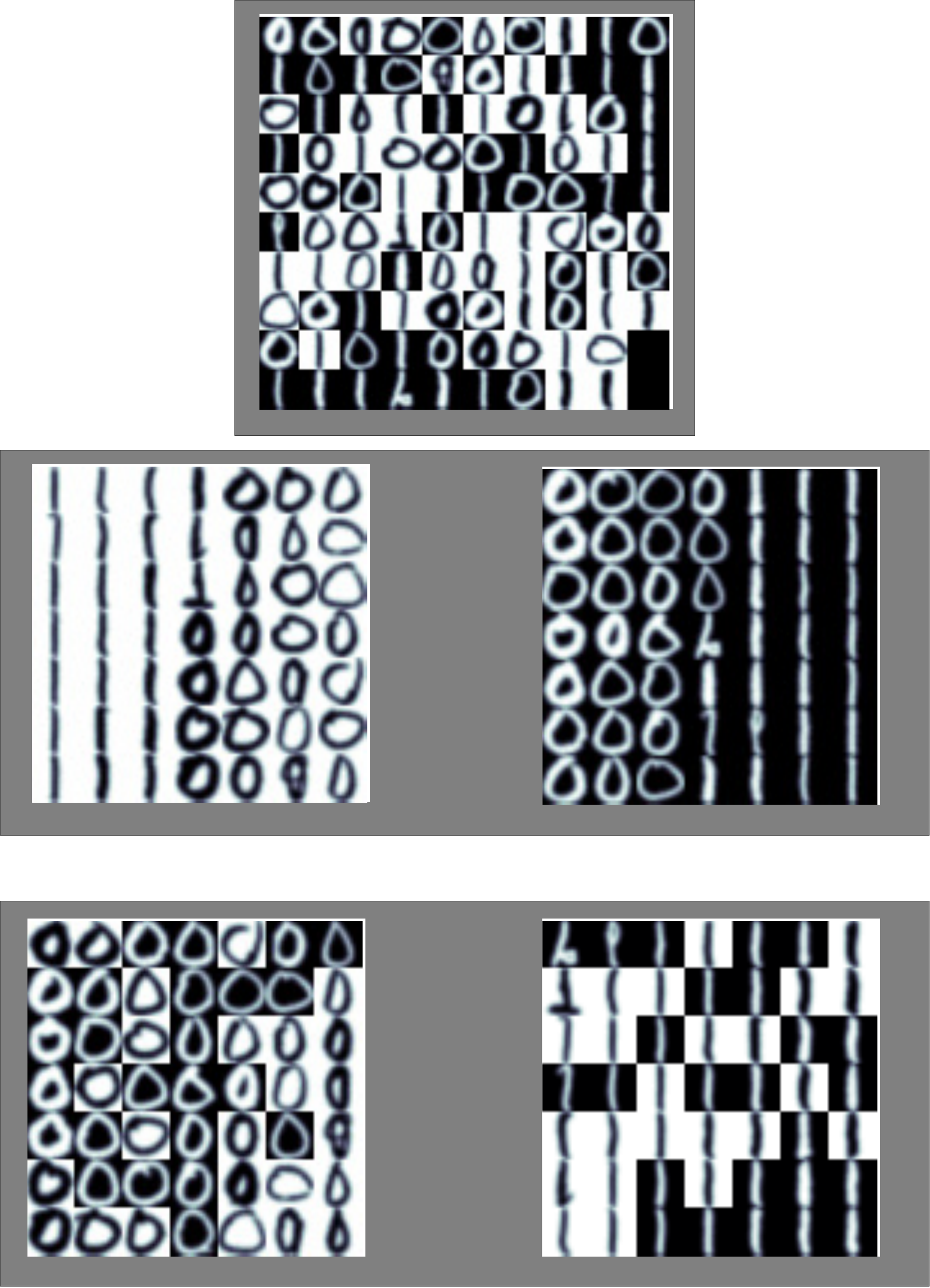}
\end{center}
\caption{(Top) 98 Handwritten digits from the USPS dataset with randomly switched signs. (Middle) Spectral Clustering using a Gaussian Kernel $(\sigma=22)$ and two clusters. (Bottom)  Spectral Clustering using the sign-invariant Gaussian Kernel $(\sigma=22)$  and two clusters.
  }\label{fig:digits}
\end{figure}

\begin{figure}[h]
\begin{center}
  \includegraphics[width=0.3\linewidth]{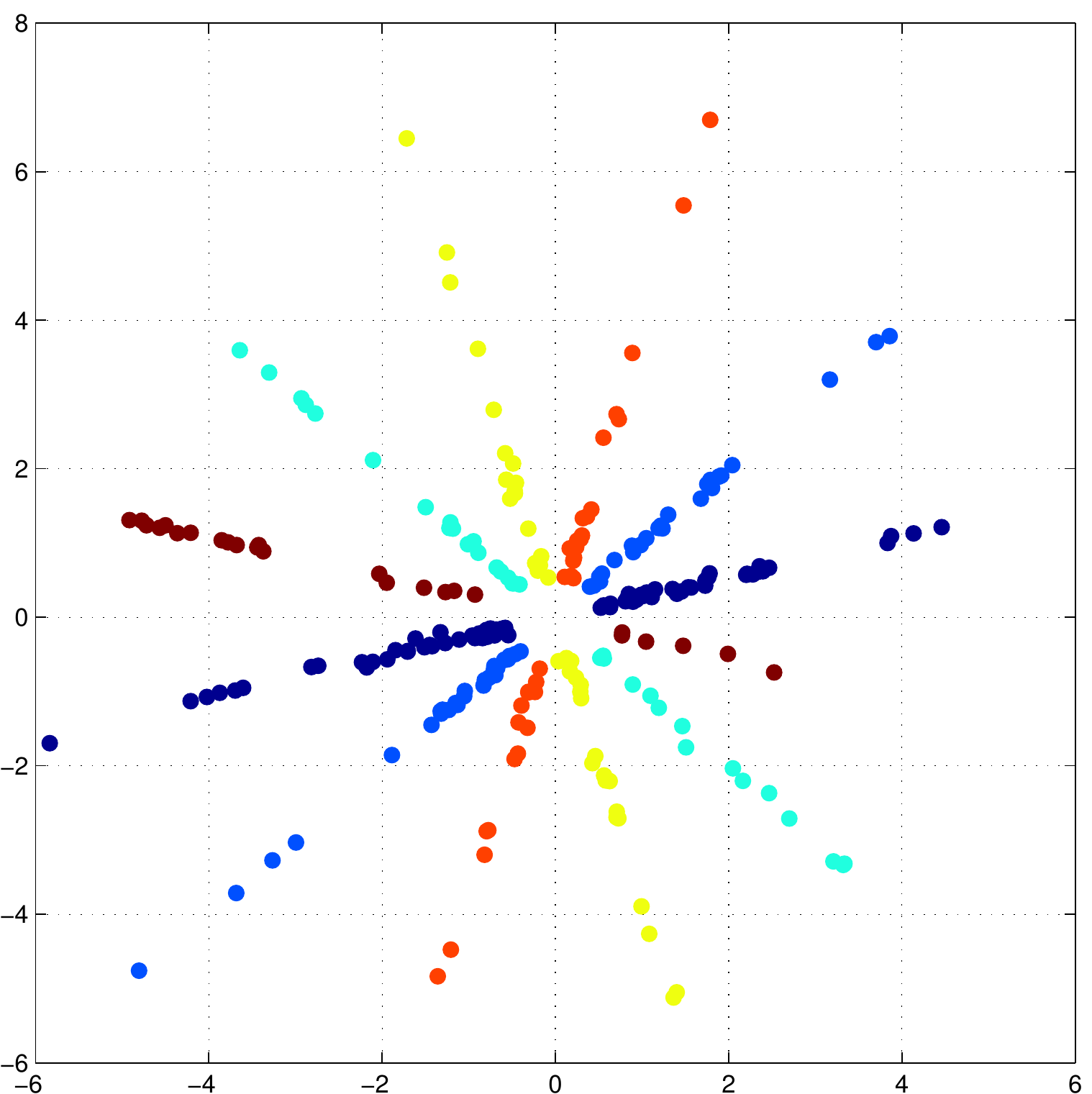}
 \includegraphics[width=0.3\linewidth]{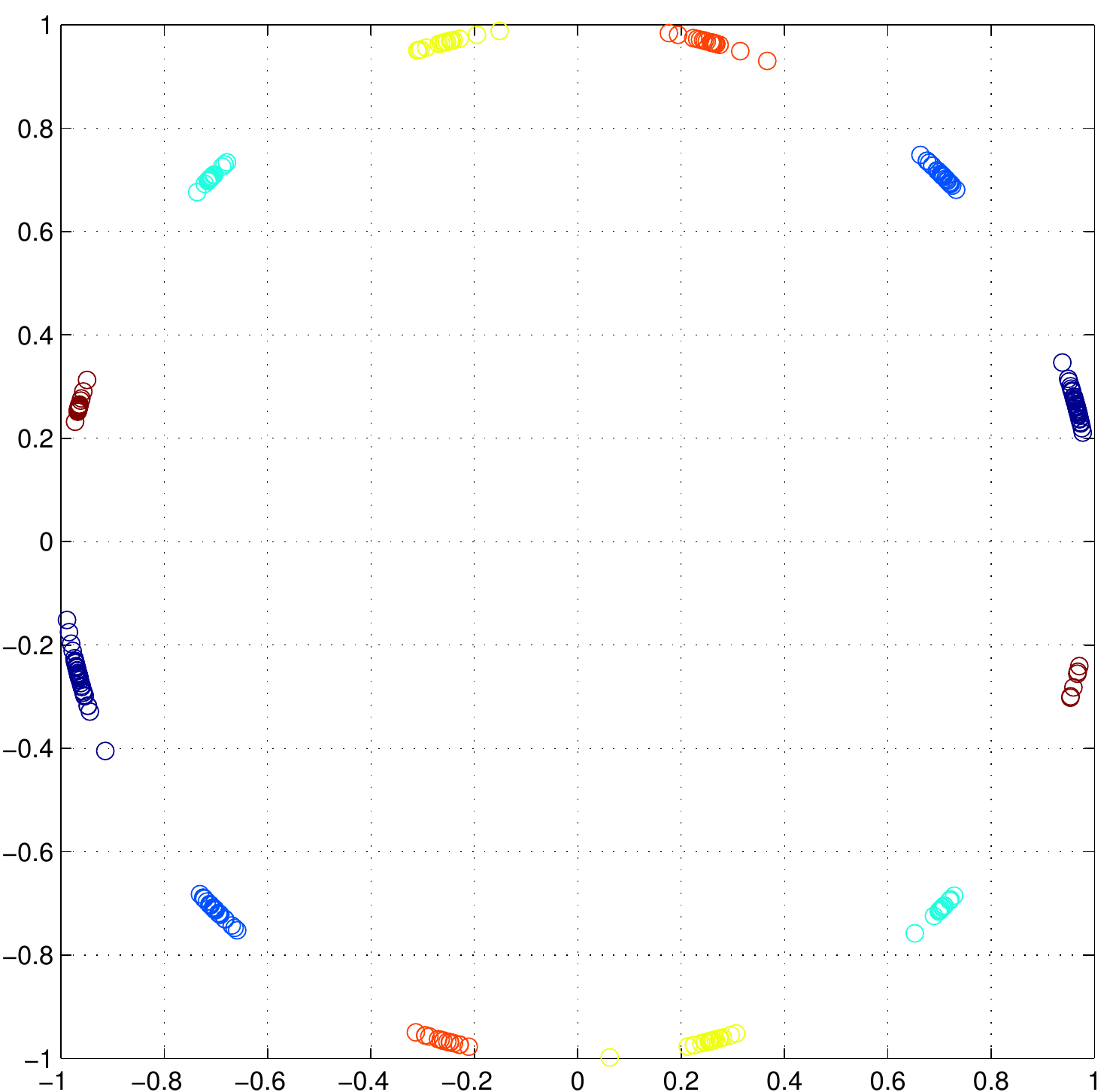}
 \includegraphics[width=0.6\linewidth]{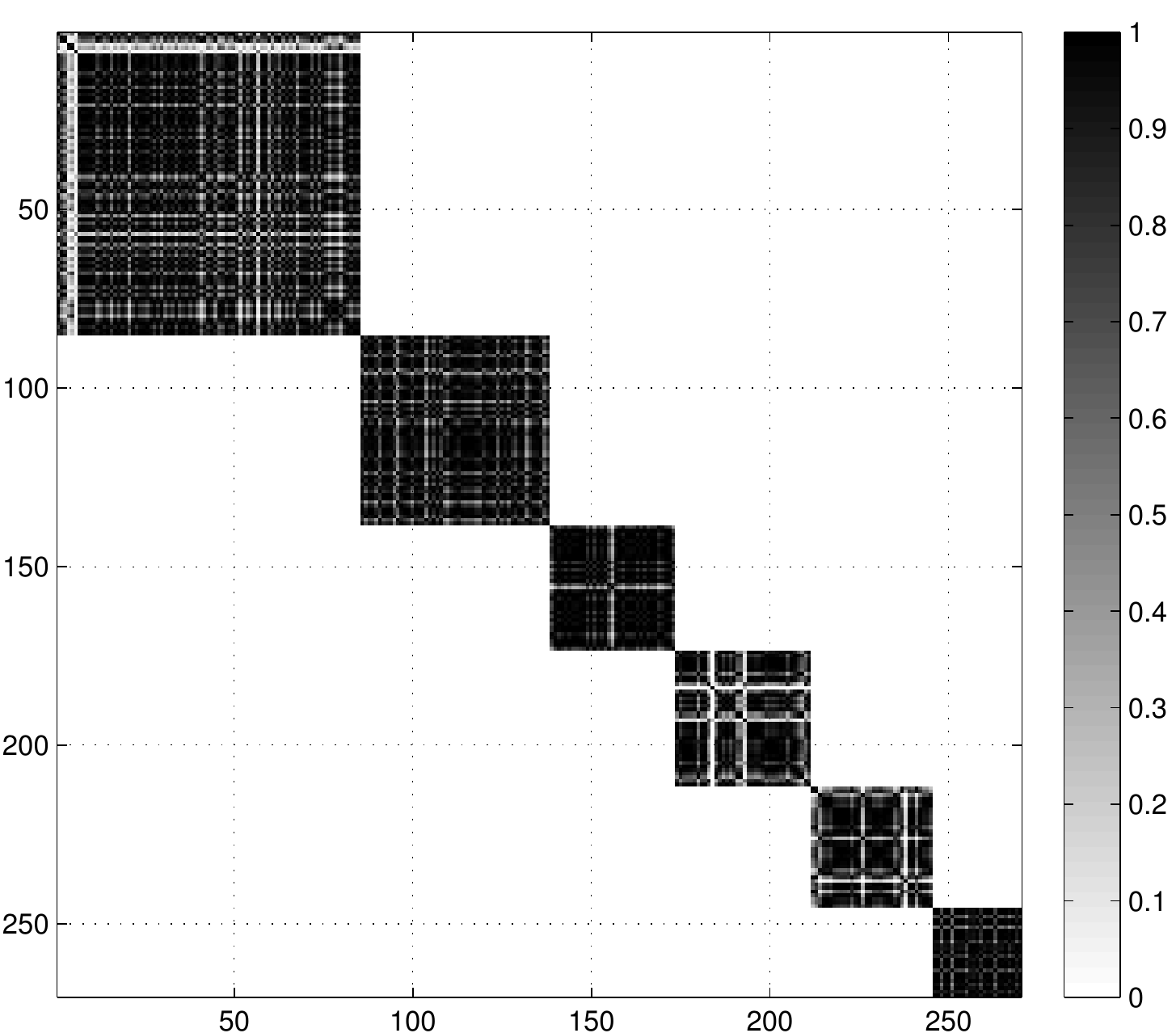}
\end{center}
\caption{
 Here we applied spectral clustering using our sign-invariant Gaussian kernel  (\ref{eq:pgk}) with $\sigma=0.1$ and 6 clusters.
(Top left panel) Clustering of sparse wavelet coefficients to solve an overcomplete ICA problem. There are six directions corresponding to six sources measured with two sensors. (Top right panel) Normalized data to obtain scaling invariance.
(Bottom panel) Rearranged kernel matrix shows a clear block structure. }
\label{fig:flutes}
\end{figure}

\subsection{Clustering for overcomplete ICA }

The main motivation for clustering with sign invariance comes from matrix
factorization methods such as ICA \citep{HyKaOj01} where the solutions are only
determined up to scaling (which can easily be addressed by
normalization) and sign or phase factors (which are more difficult to
address otherwise).
We note that the task of clustering of ICA components is important in group studies of EEG or MEG \citep{Spadone2012}.

Beyond the task of grouping ICA components, it is also possible to perform ICA  by clustering of sparse representations of the observed mixtures obtained by e.g. short-time Fourier- or wavelet-decompositions. In this
scenario the clustering approach enables the identification and
recovery of more sources then sensors, i.e. it solves an overcomplete
ICA problem
\citep{CheDonSau98,BofZib01,ZibPea01,HyKaOj01,MeiHarMue04a}.


In this example we use the ``six-flutes'' dataset of \citet{BofZib01}
which consists of two instantaneous linear mixtures $x(t)=A s(t)$ of
recordings of six different notes played with a flute. The mixing matrix $A$ has
dimensions $(2 \times 6)$.

 Applying a cosine packet tree \citep{CheDonSau98} for sparsification
and selecting 270 points with largest norm yields the dataset shown in
the left panel of figure \ref{fig:flutes}. The columns of the mixing
matrix $A$ correspond to 6 equally spaced directions in the two
dimensional space of the mixed signals, i.e.~there are six directions
representing six sources measured with two sensors.
Clustering this data is equivalent to a  ``blind'' identification of the mixing matrix $A$ up to scale and sign indeterminacies.
 In the bottom panel of figure  \ref{fig:flutes} we show the entries of the kernel matrix computed with the sign-and-scale invariant Gaussian kernel. There is a clear block structure with six blocks corresponding to six sign-and-scale invariant clusters.

\section{Conclusion}\label{sec:conclusion}

Algorithms for invariant pattern recognition have received continuous
attention over the last decades.

In this paper we propose a theoretical framework for kernel methods
that allows the systematic and constructive incorporation of algebraic
invariance structure into a Mercer Kernel.
In this manner known structural invariances can be
implemented by applying the kernel trick twice: first a nonlinear
inner kernel is constructed that hard codes the invariance and then a
second nonlinear kernel is applied to the result. The additional
computational load involved is negligible, but the gain in performance
and meaningfulness of the learned result is substantial. While the
proposed framework is general -- we showed how to code a variety of
invariances -- also its potential practical applications are numerous;
in fact, any kernel algorithm can be made invariant in this manner.

We have limited ourselves to the application of spectral clustering and
demonstrated exemplarily how to incorporate sign invariance and sign-and-scaling invariance; our experiments have validated the usefulness of our approach.

We remark that being able to construct sign-invariant, or
sign-and-scale invariant clustering algorithms is actually a pertinent
issue in biomedical applications where decomposition methods such as
PCA, ICA, CSP etc.~yield components that are to be grouped with low
computational cost, since clearly, combinatorial heuristic solutions
are infeasible in the context of the now commonly available
multi-channel imaging systems.

Here our novel algorithms will already be of practical use. Note,
however, that the spirit of this contribution is mainly considered foundational and conceptual.

Future work will be devoted to incorporate even more general
invariance structure and to the broader application of the proposed
framework in the fields of Bioinformatics and Biomedical Engineering.

\section*{Acknowledgements}
KRM and AZ gratefully acknowledge funding by DFG and BMBF. KRM thanks for partial funding by the National Research
Foundation of Korea funded by the Ministry of Education, Science, and Technology in the BK21 program. This research was carried out at MFO, supported by FK's Oberwolfach Leibniz Fellowship.

\section*{Appendix}

In the following, we provide proofs for several of the statements in the corpus regarding invariances.

\subsubsection*{Quotient rings and quotient varieties}

In the following, commutative rings will always contain the one-element.

\begin{Def}
Let $R$ be a commutative ring, and $G$ a group acting on $R$. Then, we define
$$R^G:=\{r\in R\;:\; g.r = r\;\mbox{for all}\; g\in G\}.$$
and call $R^G$ the \emph{invariant ring} of $R$ w.r.t~$G$.
\end{Def}

A classical characterization of the invariant ring is by its universal property:

\begin{Lem}\label{Lem:grpinv}
Let $\phi: S\rightarrow R$ be a homomorphism of rings, and $G$ a group acting on $R$. Let $\iota: R^G\rightarrow R$ be the canonical embedding. If $g\circ \phi = \phi$ for all $g\in G$, then there exists a unique homomorphism $\varphi: S\rightarrow R^G$, such that $\iota\circ\varphi = \phi$.
\end{Lem}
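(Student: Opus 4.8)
The plan is to recognize this as a standard corestriction argument: the hypothesis is precisely what forces the image of $\phi$ to land inside the invariant subring $R^G$, after which $\varphi$ is obtained by simply narrowing the codomain of $\phi$, and uniqueness is automatic because $\iota$ is injective.

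First I would unwind the hypothesis. Here each $g\in G$ is to be read as the ring endomorphism $R\rightarrow R,\; r\mapsto g.r$, so that $g\circ\phi$ is the map $s\mapsto g.(\phi(s))$. The assumption $g\circ\phi=\phi$ for all $g$ therefore says exactly that $g.(\phi(s))=\phi(s)$ for every $s\in S$ and every $g\in G$; by the very definition of the invariant ring this is equivalent to $\phi(S)\subseteq R^G$.

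Next I would define $\varphi:S\rightarrow R^G$ by $\varphi(s):=\phi(s)$, which is well-posed by the previous step, and verify it is a ring homomorphism. This rests on $R^G$ being a subring of $R$: since $G$ acts by ring automorphisms we have $1\in R^G$, and $R^G$ is closed under the ring operations (if $g.a=a$ and $g.b=b$ then $g.(a+b)=a+b$ and $g.(ab)=ab$). Hence $\iota$ is a ring homomorphism and $\varphi$ inherits the homomorphism property from $\phi$, being nothing but $\phi$ with its target restricted from $R$ to $R^G$. By construction $\iota\circ\varphi=\phi$, which is the desired factorization.

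Finally, uniqueness follows from the injectivity of $\iota$: if $\varphi'$ also satisfies $\iota\circ\varphi'=\phi$, then $\iota(\varphi'(s))=\phi(s)=\iota(\varphi(s))$ for all $s$, and since $\iota$ is the canonical embedding (hence injective) we conclude $\varphi'=\varphi$. I expect no genuine obstacle here; the only point demanding care is the implicit assumption that $G$ acts on $R$ by ring automorphisms, which is exactly what guarantees that $R^G$ is a subring and therefore that $\varphi$ is a bona fide ring homomorphism rather than merely a map of sets.
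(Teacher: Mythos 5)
Your proof is correct and follows essentially the same route as the paper's: both reduce the claim to showing that the hypothesis forces $\phi(S)\subseteq R^G$, after which $\varphi$ is the corestriction and uniqueness is immediate from the injectivity of $\iota$. Your write-up is simply more explicit about the points the paper leaves tacit (that $R^G$ is a subring and that $\iota$ being injective gives uniqueness).
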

\begin{proof}
If suffices to show that the image of $\phi$ is contained in $R^G$. But $g\circ \phi = \phi$ for all $g\in G$ implies $g.\phi(s) = \phi(s)$ for all $g\in R, s\in S$, which implies that $\phi(s)\in R^G$ for all $s\in S$, which proves the claim.
\end{proof}

\begin{Def}
Let $V,W$ be algebraic varieties, let $G$ be a group acting on $V$. A homomorphism $f:V\rightarrow W$ is called $G$-invariant if $f\circ g = f$ for all $g\in G$.
\end{Def}

We proceed with the geometric analogue of Lemma~\ref{Lem:grpinv}, which is the analogue universal property for the scheme-theoretic quotient, which we formulate in a form less general than usual in algebraic geometry, but which fits the specific setting outlined in the main corpus.

\begin{Prop}\label{Prop:invs}
Let $V\subseteq \KK^n$ be an affine algebraic variety, let $G$ be a group algebraically acting on $V$. Then, there is a family of algebraic invariant maps
$$q: V\rightarrow V/G \subseteq \KK^I \quad\mbox{, i.e.,} \quad \left(q_i:\KK^n\rightarrow \KK\right)_{i\in I},$$
such that for all varieties $W\subseteq \KK^m $ and $G$-invariant homomorphisms $f:V\rightarrow W$ there is a unique homomorphism $g:V/G\rightarrow W$ such that $f = g\circ q$.
Furthermore, $I$ is countable, and if $G$ is finite, then so is $I$.
\end{Prop}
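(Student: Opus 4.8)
The plan is to translate the geometric statement into commutative algebra via the standard contravariant dictionary between affine varieties and their coordinate rings, and then to deduce the geometric universal property from its purely ring-theoretic counterpart, Lemma~\ref{Lem:grpinv}. Recall that to each affine variety $V\subseteq\KK^n$ one associates its coordinate ring $\KK[V]=\KK[x_1,\dots,x_n]/\Id(V)$, and that polynomial maps $f:V\to W$ correspond bijectively and contravariantly to $\KK$-algebra homomorphisms $f^\ast:\KK[W]\to\KK[V]$, $h\mapsto h\circ f$. The algebraic action of $G$ on $V$ induces an action on $\KK[V]$ by $g.h:=h\circ g^{-1}$, and the fixed subring is exactly the invariant ring $\KK[V]^G$ in the sense of the appendix. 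The whole proposition then becomes the assertion that $\KK[V]^G$, regarded as the coordinate ring of a subset of $\KK^I$, is the geometric quotient.

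First I would construct $q$: choose a family $(q_i)_{i\in I}$ of elements of $\KK[V]^G$ generating it as a $\KK$-algebra, set $q=(q_i)_{i\in I}:V\to\KK^I$, and define $V/G:=q(V)$, so that $q^\ast$ identifies the coordinate ring of $V/G$ with $\KK[V]^G\subseteq\KK[V]$, the identification being the canonical embedding $\iota$. To verify the universal property, let $f:V\to W$ be $G$-invariant, i.e.\ $f\circ g=f$ for every $g\in G$. Dualising, $g.f^\ast(h)=f^\ast(h)\circ g^{-1}=h\circ f\circ g^{-1}=h\circ f=f^\ast(h)$, so the image of $f^\ast:\KK[W]\to\KK[V]$ lies in $\KK[V]^G$. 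Lemma~\ref{Lem:grpinv} (applied with $S=\KK[W]$, $R=\KK[V]$, $\phi=f^\ast$) then furnishes a unique $\varphi:\KK[W]\to\KK[V]^G$ with $\iota\circ\varphi=f^\ast$; reading this back through the dictionary yields a unique morphism $g:V/G\to W$ with $f=g\circ q$. Uniqueness of $g$ reflects uniqueness of $\varphi$ together with the injectivity of $q^\ast=\iota$.

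For the size of $I$: since $\KK[V]$ is a finitely generated $\KK$-algebra, it is spanned as a $\KK$-vector space by the (countably many) monomials modulo $\Id(V)$, so $\dim_\KK\KK[V]\le\aleph_0$; hence the subspace $\KK[V]^G$ has at most countable $\KK$-dimension, and any vector-space basis is a fortiori an algebra-generating set, giving $I$ countable. When $G$ is finite, Noether's bound (equivalently Hilbert's finiteness theorem) shows $\KK[V]^G$ is finitely generated as a $\KK$-algebra, so $I$ may be taken finite.

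The hard part will be making the variety-algebra correspondence rigorous in exactly the generality needed. Two points deserve care. First, for infinite $G$ the ring $\KK[V]^G$ need not be finitely generated (Nagata's counterexample), so $V/G$ genuinely sits inside the infinite-dimensional $\KK^I$ and is not an affine variety in the naive finite-dimensional sense; one must check the factorisation still makes sense, which it does, because the source $\KK[W]$ of $f^\ast$ is finitely generated and so the factorisation involves only finitely many of the $q_i$. Second, since $\KK=\RR$ or $\CC$ is not always algebraically closed, the precise bijection between morphisms and $\KK$-algebra homomorphisms relies on the Nullstellensatz and is cleanest over $\CC$; over $\RR$ I would either argue directly with polynomial maps on $\KK$-points or base-change to $\CC$, run the argument there, and check that the resulting $\varphi$ (hence $g$) is defined over $\RR$.
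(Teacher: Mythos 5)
Your proof is correct and follows essentially the same route as the paper's: dualise to coordinate rings, factor through the invariant ring via Lemma~\ref{Lem:grpinv}, and deduce countability of $I$ (resp.\ finiteness for finite $G$) from the countable $\KK$-dimension of the finitely generated algebra $\KK[V]$ (resp.\ Hilbert's finiteness theorem). Your extra care about non-algebraically-closed $\KK$ and about the factorisation only involving finitely many $q_i$ is sound and goes slightly beyond what the paper spells out, but it does not change the argument.
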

\begin{proof}
By the algebra-geometry duality (contravariant equivalence of commutative rings and affine schemes), the homomorphism of varieties $f: V\rightarrow W$ is dual to a homomorphism of rings $\phi: S\rightarrow R$, with $W = \Spec S$ and $V = \Spec R$, and $G$ canonically acting on $R$. By Lemma~\ref{Lem:grpinv}, $\phi$ factors uniquely as $\iota\circ\varphi = \phi$, into $\varphi: S\rightarrow R^G$, and the canoncial quotient morphism $\iota: R^G\rightarrow R$. Taking $V/G := \Spec R^G$ and $q:=\Spec(\iota), g := \Spec (q)$ yields the first claim on existence of $q$ and factorization of $f$.
For the second statement, note that $V$ is contained in $\KK^n$, thus $R$ is finitely generated over $\KK$, thus $R$ is an $\KK$-vector space of countable dimension. As a sub-vector space, $R^G$ is also of countable dimension over $\KK$. As $R^G$ is generated by any vector space basis, it is countably generated over $\KK$, this $I$ can be taken countable. Moreover, if $G$ is finite, then by Hilbert's invariant theorem, $R^G$ is finitely generated over $\KK$ as well.
\end{proof}

We would like to note that if $G$ is infinite, it can both happen that $V/G$ is of finite type or not. For example, an infinite but linearly reductive $G$ gives rise to a finite type quotient, as well as any $G$ in the case $n=1$ or $2$, see~\citep{Zariski54}. But $V/G$ can also be not of finite type, as any counterexample to Hilbert's 14-th problem shows, such as the example of~\cite{Totaro2008} in the case $n=16$ which can be invoked with a real or complex ground field $\KK$.

\subsubsection*{The universal property of the quotient}

We proceed with the proof of Theorem~\ref{Thm:quot} which is essentially a minor variant of Proposition~\ref{Prop:invs} above and translates the universal property of the quotient to a setting more suitable for kernels.

{\bf Proof of Theorem~\ref{Thm:quot}}.
Consider first the case $\calF = \KK$.

Note that $G$ acts canonically on the $\KK$-vector space $C(W, \calF)$, with an vector space of invariant functions $C(W, \calF)^G$.
Now, by the Stone-Weierstra\ss-Theorem, the algebraic/polynomial functions $\Pol (W,\calF)$ are dense in the continuous functions $C(W, \calF)$. Taking the intersection with $C(W, \calF)^G$, it follows that the $G$-invariant algebraic/polynomial functions $\Pol (W,\calF)^G$ are dense in $C(W, \calF)^G$.

Now (by categorical equivalence of evaluation homomorphisms and polynomials over the characteristic zero field $\KK$), the space $\Pol (W,\calF)$ is canonically isomorphic to the ring $R=\KK [X_1,\dots, X_n]/\Id (W)$, while $\Pol (W,\calF)^G$ is isomorphic to $R^G$. Note that $\Id(W) = \Id(Z)$, where $Z$ is the Zariski closure of $W$ in $\KK^n$. Thus, $\Pol (W,\calF)$ is isomorphic to $\Pol (Z,\calF)$, and $\Pol (W,\calF)^G$ is isomorphic to $\Pol (Z,\calF)^G$, showing that in the above argumentation $W$ can be replaced by the variety $Z$.

Proposition~\ref{Prop:invs} yields, for every $f\in \Pol (Z,\calF)^G$, a decomposition of the desired kind, thus by isomorphism for any $f\in \Pol (W, \calF)^G$, thus by denseness for any $f\in C(W, \calF)^G$.

For general $\calF$, one obtains the analogue statement by passing to a basis representation $\calF = \spn (f_i, i\in I)$, yielding a decomposition $\Pol (W,\calF) = \bigoplus_{i\in I} \Pol (W,\KK).$

Note that the Hilbert space $\calI$ in the statement is therefore isomorphic to the dual of $\Pol (W,\calF)^G$.

\subsubsection*{Isometric invariants}

\begin{Lem}\label{Lem:rot}
Consider the orthogonal/unitary group $G=\Orth(n)$ resp.~$G= \Unit(n)$ (consisting of orthogonal/unitary matrices) which acts on $V=\KK^n$, i.e, $A.x = A\cdot x.$\\
Then the invariant map is $q:V\rightarrow V/G\subseteq \RR$ given by
$x \mapsto \langle x,x\rangle.$
\end{Lem}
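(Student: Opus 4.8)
The plan is to apply Proposition~\ref{Prop:invs}, which reduces the determination of the invariant map $q$ to identifying the generators of the invariant ring $R^G$. Since here $V=\KK^n$ is all of affine space we have $\Id(V)=0$ and hence $R=\KK[X_1,\dots,X_n]$, so it suffices to show that $R^G$ is generated as a $\KK$-algebra by the single element $\langle x,x\rangle$. One half is immediate: for any $A\in G$ (orthogonal resp.~unitary) the map $A$ preserves the (Hermitian) inner product, so $\langle Ax,Ax\rangle=\langle x,x\rangle$ and thus $\langle x,x\rangle\in R^G$.

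First I would establish the geometric input that $G$ acts transitively on each sphere $\{x\in\KK^n:\langle x,x\rangle=r\}$, $r\ge 0$: given $x,y$ with $\langle x,x\rangle=\langle y,y\rangle$, one extends $x/\|x\|$ and $y/\|y\|$ to orthonormal (resp.~unitary) bases and lets $A\in G$ be the matrix carrying the first basis to the second, so that $Ax=y$. Consequently two vectors lie in the same $G$-orbit precisely when they share the value of $\langle x,x\rangle$, and the set of orbits is parametrized by $\RR_{\ge 0}$ through $x\mapsto\langle x,x\rangle$.

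It then remains to upgrade this orbit-level statement to the algebraic claim that every invariant polynomial lies in $\KK[\langle x,x\rangle]$. Any $p\in R^G$ is constant on orbits, hence by transitivity factors as $p(x)=f(\langle x,x\rangle)$ for some function $f$; restricting to a line $t\mapsto p(te_1)$ through a fixed unit vector $e_1$ gives a one-variable polynomial which, because $-\id\in G$ and in the complex case the phase rotation $\mathrm{diag}(e^{i\theta},1,\dots,1)$ lies in $\Unit(n)$, is even resp.~phase-invariant, hence a polynomial in $t^2=\langle te_1,te_1\rangle$. Tracing this back yields $p=g(\langle x,x\rangle)$ for a univariate polynomial $g$, so $R^G=\KK[\langle x,x\rangle]$. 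Proposition~\ref{Prop:invs} then identifies the invariant map with $q:x\mapsto\langle x,x\rangle$ and $V/G=q(V)=\RR_{\ge 0}\subseteq\RR$.

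The main obstacle is this completeness half -- showing that no invariants beyond polynomials in $\langle x,x\rangle$ survive -- which is exactly the first fundamental theorem of invariant theory for $\Orth(n)$ resp.~$\Unit(n)$; the transitivity argument is what renders it elementary here. A secondary point of care is the unitary case: $\Unit(n)$ is not a complex-algebraic group, so $R^G$ must be read in the real-polynomial (analytic) sense consistent with the Stone-Weierstra\ss{} reduction used in the proof of Theorem~\ref{Thm:quot}, and the relevant invariant $\langle x,x\rangle=\sum_i x_i\overline{x_i}$ is real-valued, which is why the quotient lands in $\RR$.
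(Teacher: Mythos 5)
Your proposal is correct and rests on the same core observation as the paper's proof, namely that $G$ acts transitively on each sphere $\{\langle x,x\rangle=r\}$ so that $x\mapsto\langle x,x\rangle$ classifies orbits; the paper states exactly this in one line (``any $x$ is $G$-equivalent to $\|x\|\cdot e_1$, and the orbits of $\|x\|\cdot e_1$ are distinct'') and offers as an alternative the instantiation of Lemma~\ref{Lem:diagonalrot} at $(x,0)$. The only difference is that you additionally carry out the ring-level completeness step (restriction to a line plus evenness/phase-invariance, giving $R^G=\KK[\langle x,x\rangle]$), which the paper leaves implicit or delegates to the first fundamental theorem cited in Lemma~\ref{Lem:diagonalrot}; this is a legitimate and slightly more self-contained filling-in rather than a different route.
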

\begin{proof}
Observe that any $x\in \KK^n$ is $G$-equivalent to $\|x\|\cdot e_1$, where $e_1$ is a standard normal basis vector, and the orbits of $\|x\|\cdot e_1$ are distinct. Alternatively, instantiate the diagonal invariant map from Lemma~\ref{Lem:diagonalrot} on vectors $(x,0)$.
\end{proof}

\begin{Lem}\label{Lem:diagonalrot}
Consider the orthogonal/unitary group $G=\Orth(n)$ resp.~$G= \Unit(n)$ (consisting of orthogonal/unitary matrices) which acts diagonally on $V=\KK^n\times \KK^n$, i.e, $A.(x,y) = (A\cdot x, A\cdot y).$\\
Then the invariant map is $q:V\rightarrow V/G\subseteq \RR\times \KK\times \RR$ given by
$$(x,y)\mapsto \left(\langle x,x\rangle,\langle x,y\rangle, \langle y,y\rangle \right).$$
\end{Lem}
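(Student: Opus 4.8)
The plan is to verify that the three Gram functions $g_1(x,y) = \langle x,x\rangle$, $g_2(x,y)=\langle x,y\rangle$ and $g_3(x,y)=\langle y,y\rangle$ are $G$-invariant, and then to show that they in fact generate the entire invariant ring, so that by Proposition~\ref{Prop:invs} the explicit map $q = (g_1,g_2,g_3)$ coincides (up to isomorphism) with the canonical quotient map. Invariance is immediate: for $A \in \Orth(n)$ (resp.\ $\Unit(n)$) one has $A^\top A = I$ (resp.\ $A^* A = I$), hence $\langle Ax, Ay\rangle = \langle x,y\rangle$ for all $x,y$, and likewise for the two diagonal terms. Consequently $g_1,g_2,g_3$ lie in $R^G$, where $R$ is the coordinate ring of $V = \KK^n\times\KK^n$ and $R^G$ its invariant subring; in particular, by the universal property of Proposition~\ref{Prop:invs}, the map $q$ factors through the canonical quotient.

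To identify $q$ with the canonical quotient itself, I would translate everything into ring language as in Proposition~\ref{Prop:invs}: the canonical quotient corresponds to the inclusion $R^G \hookrightarrow R$, whereas $q$ corresponds to the inclusion of the subring $A := \KK[g_1,g_2,g_3] \subseteq R^G$, so that the induced comparison map is an isomorphism onto its image precisely when $A = R^G$. This equality is exactly the content of the First Fundamental Theorem of invariant theory for the orthogonal (resp.\ unitary) group: the polynomial invariants of a tuple of vectors under $\Orth(n)$ are generated by the pairwise inner products, and for two vectors these are just $\langle x,x\rangle,\langle x,y\rangle,\langle y,y\rangle$. For $\Unit(n)$ I would invoke the real form of the statement, viewing $\CC^n$ as $\RR^{2n}$ and noting that $\langle y,x\rangle = \overline{\langle x,y\rangle}$ is already a function of $g_2$, so that no additional generator is produced and the codomain $\RR\times\KK\times\RR$ is the correct one. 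I would either cite this theorem or, if a self-contained argument is wanted, reprove it by Weyl's polarization/restitution method.

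Finally, I would record the orbit-separation statement that makes the image concrete and confirms that the induced map is a bijection on orbits, namely that two pairs with equal Gram data lie in a single $G$-orbit. This follows from a normal-form computation: when $x\neq 0$, one rotates $x$ to $\sqrt{\langle x,x\rangle}\,e_1$, splits $y = \beta e_1 + w$ with $w\perp e_1$ where $\beta$ is determined by $g_1,g_2$ and $\|w\|^2 = g_3 - |\beta|^2$, and then rotates $w$ into $\|w\|\,e_2$ by an element of $G$ stabilizing $e_1$; the degenerate case $x=0$ reduces to the single-vector normal form $y\sim\sqrt{g_3}\,e_1$. The resulting representative depends only on $(g_1,g_2,g_3)$, giving injectivity of $q$ on orbits.

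I expect the main obstacle to be the generation statement $A = R^G$, rather than invariance or orbit separation. Orbit separation only yields a set-theoretic bijection onto the image, and upgrading this to an isomorphism of quotients genuinely requires the First Fundamental Theorem: the image in $\RR\times\KK\times\RR$ need not be normal, so bijectivity of the comparison map on points does not by itself force the comorphism $A \hookrightarrow R^G$ to be surjective. The secondary technical point to handle carefully is the \emph{algebraicity} of the $\Unit(n)$-action, which I would address by passing to the real structure $\CC^n\cong\RR^{2n}$ so that Proposition~\ref{Prop:invs} and the real invariant theory apply verbatim.
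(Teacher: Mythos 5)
Your proposal is correct and rests on the same key fact as the paper: the paper's entire proof of this lemma is a one-line citation to Lemma~6.1.1 of \citep{Cro09}, which is precisely the First Fundamental Theorem for $\Orth(n)$/$\Unit(n)$ invariants of pairs of vectors that you invoke. Your additional observations --- that orbit separation alone gives only a set-theoretic bijection and that generation of $R^G$ by the Gram functions is the real content, plus the passage to the real form for $\Unit(n)$ --- supply exactly the detail the paper delegates to that citation.
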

\begin{proof}
This is shown in Lemma 6.1.1~of~\citep{Cro09}.
\end{proof}

\begin{Lem}\label{Lem:diagonaltrans}
Consider the translational group $G=\Trans(n)\cong \KK^n$ which acts diagonally on $V=\KK^n\times \KK^n$, i.e, $v.(x,y) = (x + v, y + v).$\\
Then the invariant map is $q:V\rightarrow V/G\subseteq \KK^n$ given by
$(x,y)\mapsto \left( x - y \right).$
\end{Lem}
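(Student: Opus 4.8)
The plan is to verify that $q:(x,y)\mapsto x-y$ satisfies the defining universal property of the quotient map from Proposition~\ref{Prop:invs}, which amounts to showing that the coordinates of $x-y$ generate the invariant ring and that $q$ separates orbits. I would first record invariance directly: for any $v\in\KK^n$,
$$q(v.(x,y)) = q(x+v,y+v) = (x+v)-(y+v) = x-y = q(x,y),$$
so $q$ is $G$-invariant and hence factors through $V/G$.

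The substantive step is the identification of the invariant ring. Writing $R=\KK[X_1,\dots,X_n,Y_1,\dots,Y_n]$ for the coordinate ring of $V=\KK^n\times\KK^n$, a polynomial $f$ lies in $R^G$ precisely when $f(x+v,y+v)=f(x,y)$ for all $v\in\KK^n$. Each difference $X_i-Y_i$ obviously satisfies this. Conversely, given an invariant $f$, applying the translation by $-y$ yields $f(x,y)=f(x-y,0)$, which exhibits $f$ as a polynomial in the $X_i-Y_i$; thus $R^G=\KK[X_1-Y_1,\dots,X_n-Y_n]$. Proposition~\ref{Prop:invs} then lets me take $V/G=\Spec R^G\cong\KK^n$ with quotient map $q$, as claimed. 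In parallel with Lemma~\ref{Lem:rot}, the same conclusion can be reached geometrically: every $(x,y)$ is $G$-equivalent to the canonical representative $(x-y,0)$ via translation by $-y$, and distinct values of $x-y$ yield distinct orbits, so $q$ identifies the orbit space with $\KK^n$.

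I do not expect a genuine obstacle here; the only point requiring a moment's attention is confirming that $R^G$ contains nothing beyond the polynomials in the differences, which the substitution following translation by $-y$ settles at once. Conceptually, the lemma merely records that simultaneous translation is quotiented out exactly by passing to the difference $x-y$.
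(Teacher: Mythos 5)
Your proof is correct. The paper's own proof is a one-line appeal to linearity: since the action is linear, the group quotient as a variety coincides with the vector-space quotient of $V=\KK^n\times\KK^n$ by the diagonal subspace $\{(v,v):v\in\KK^n\}$, which is realized by the linear map $(x,y)\mapsto x-y$ whose kernel is exactly that subspace. You instead verify the universal property directly in the algebraic framework of Proposition~\ref{Prop:invs}: the substitution $v=-y$ shows that every invariant polynomial satisfies $f(x,y)=f(x-y,0)$, hence $R^G=\KK[X_1-Y_1,\dots,X_n-Y_n]$, and your orbit-separation remark confirms that $q$ identifies the orbit space with $\KK^n$. The two arguments are dual views of the same fact; yours buys explicitness (it exhibits the generators of the invariant ring, which is what $\Spec R^G$ in Proposition~\ref{Prop:invs} actually requires, and it mirrors the canonical-representative style of Lemma~\ref{Lem:rot}), while the paper's buys brevity at the cost of leaving the identification of the vector-space quotient with the image of $q$ implicit. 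No gap in either; your version is simply the fleshed-out form of the argument the paper compresses into one sentence.
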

\begin{proof}
The action is linear, thus the group quotient as variety equals the vector space quotient, which is the image of the linear map $q$.
\end{proof}

\begin{Lem}\label{Lem:diagonalrottrans}
Consider the Euclidean group $G=\Euc (n)$ generated by translations and orthogonal/unitary rotations which acts diagonally on $V=\KK^n\times \KK^n$.\\
Then the invariant map is $q:V\rightarrow V/G\subseteq \RR$ given by
$(x,y)\mapsto \left\| x - y \right\|^2 = \langle x-y,x-y\rangle.$
\end{Lem}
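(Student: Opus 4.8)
The plan is to exploit the structure of the Euclidean group as a composite of the two subgroups whose diagonal invariants have already been determined, namely the translations $\Trans(n)$ (Lemma~\ref{Lem:diagonaltrans}) and the orthogonal/unitary rotations $\Orth(n)$ resp.~$\Unit(n)$ (Lemma~\ref{Lem:rot}), and to iterate the quotient construction of Proposition~\ref{Prop:invs}. Since $\Euc(n) = \Orth(n) \ltimes \Trans(n)$ is generated by translations and rotations, with the translations forming a \emph{normal} subgroup, the ring of diagonal invariants satisfies $R^{\Euc(n)} = (R^{\Trans(n)})^{\Orth(n)}$; this is precisely the compatibility situation flagged in the discussion of multiple invariances, so it suffices to quotient by translations first and then by the residual rotation action.

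First I would quotient by the diagonal translation action. By Lemma~\ref{Lem:diagonaltrans}, the translation-invariant map is $(x,y) \mapsto x - y$, so after this step the data is the single vector $z := x - y \in \KK^n$. The residual action of $A \in \Orth(n)$ (resp.~$\Unit(n)$) on this quotient is obtained by pushing the diagonal action forward: since $A.(x,y) = (Ax, Ay)$, the difference transforms as $z = x - y \mapsto Ax - Ay = A(x-y) = Az$. Hence the induced action on the translation quotient $\KK^n$ is exactly the standard matrix action of the rotation group on a single vector; in particular it is well-defined, since the translation part drops out of the difference.

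Next I would apply Lemma~\ref{Lem:rot} to this residual single-vector rotation action: the corresponding invariant map is $z \mapsto \langle z, z\rangle$. Composing the two quotient maps then yields the overall invariant map $(x,y) \mapsto \langle x-y, x-y\rangle = \|x-y\|^2$, as claimed.

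The one point requiring care --- and the main, if routine, obstacle --- is the justification that iterating the two quotients genuinely computes the quotient by the generated group $\Euc(n)$, rather than a coarser or finer object. This rests on the translations being normal in $\Euc(n)$, so that $\Orth(n)$ really does act on $R^{\Trans(n)}$ and the identity $R^{\Euc(n)} = (R^{\Trans(n)})^{\Orth(n)}$ holds. As a consistency check one can verify orbit separation directly: two pairs $(x,y)$ and $(x',y')$ lie in a common $\Euc(n)$-orbit iff there is an isometry $u \mapsto Au + b$ with $Ax + b = x'$ and $Ay + b = y'$; subtracting gives $A(x-y) = x'-y'$, forcing $\|x-y\| = \|x'-y'\|$, while conversely any $A$ with $A(x-y) = x'-y'$ together with $b := x' - Ax$ realizes such an isometry. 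Hence the orbits are exactly the level sets of $\|x-y\|$, confirming that $\|x-y\|^2$ is a complete invariant.
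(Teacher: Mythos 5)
Your proof is correct and follows essentially the same route as the paper's: it uses the normality of $\Trans(n)$ in $\Euc(n)$ to compute the quotient in two stages via Lemmas~\ref{Lem:diagonaltrans} and~\ref{Lem:rot}, and your concluding orbit-separation check coincides with the paper's stated alternative argument.
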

\begin{proof}
Recall that $G$ is generated by the translations $\Trans(n)$ and orthogonal/unitary matrices $H=\Orth(n)$ resp.~$H=\Unit(n)$, that $\Trans(n)$ is a normal subgroup of $G=\Euc (n)$, and the corresponding factor group is $\Euc(n) / \Trans(n) = H$. Thus, $V/G = (V/\Trans(n))/H$.
By Lemma~\ref{Lem:diagonaltrans}, there is a canonical quotient map
$$q_T: V\rightarrow V/\Trans(n) = \KK^n, (x,y)\mapsto \left( x - y\right),$$
thus $(V/\Trans(n))/H = V/G = \KK^n/H$. By Lemma~\ref{Lem:rot}, there is a further canonical quotient map
$$q_H: V/\Trans(n)\rightarrow V/G = \RR, x\mapsto \langle x,x\rangle = \|x\|^2,$$
which after composition $q:=q_H\circ q_T$ yields the claim.
An alternative proof can be obtained in analogy to that of Lemma~\ref{Lem:rot}, by observing and verifying in an elementary way that any $(x,y)\in \KK^n\times \KK^n$ is diagonally $G$-equivalent to $\|x-y\|\cdot e_1$, and two such vectors are distinct if and only if their absolute value is.
\end{proof}

\bibliographystyle{plainnat}
\bibliography{invariance}

\begin{thebibliography}{34}
\providecommand{\natexlab}[1]{#1}
\providecommand{\url}[1]{\texttt{#1}}
\expandafter\ifx\csname urlstyle\endcsname\relax
  \providecommand{\doi}[1]{doi: #1}\else
  \providecommand{\doi}{doi: \begingroup \urlstyle{rm}\Url}\fi

\bibitem[Bengio(2009)]{Bengio}
Yoshua Bengio.
\newblock Learning deep architectures for {AI}.
\newblock \emph{Foundations and Trends in Machine Learning}, 2\penalty0
  (1):\penalty0 1--127, 2009.
\newblock \doi{10.1561/2200000006}.
\newblock Also published as a book. Now Publishers, 2009.

\bibitem[Bofill and Zibulevsky(2001)]{BofZib01}
Pau Bofill and Michael Zibulevsky.
\newblock Underdetermined blind source separation using sparse representations.
\newblock \emph{Signal Processing}, 81:\penalty0 2353--2362, 2001.

\bibitem[Boser et~al.(1992)Boser, Guyon, and Vapnik]{BoserVapnik92}
Bernhard~E. Boser, Isabelle~M. Guyon, and Vladimir~N. Vapnik.
\newblock A training algorithm for optimal margin classifiers.
\newblock In \emph{Proceedings of the 5th Annual ACM Workshop on Computational
  Learning Theory}, pages 144--152. ACM Press, 1992.

\bibitem[Chapelle and Sch{\"o}lkopf(2001)]{chapelle2001incorporating}
Olivier Chapelle and Bernhard Sch{\"o}lkopf.
\newblock Incorporating invariances in non-linear support vector machines.
\newblock In \emph{Advances in neural information processing systems}, pages
  609--616, 2001.

\bibitem[Chen et~al.(1998)Chen, Donoho, and Saunders]{CheDonSau98}
Scott~S. Chen, David~L. Donoho, and Michael~A. Saunders.
\newblock Atomic decomposition by basis pursuit.
\newblock \emph{SIAM Journal on Scientific Computing}, 20\penalty0
  (1):\penalty0 33--61, 1998.

\bibitem[Crook(2009)]{Cro09}
Deborah Crook.
\newblock \emph{Polynomial Invariants of the Euclidean Group - Action on
  Multiple Screws}.
\newblock PhD thesis, Victoria University of Wellington, 2009.

\bibitem[Dhillon et~al.(2004)Dhillon, Guan, and Kulis]{Dhillon04}
Inderjit~S. Dhillon, Yuqiang Guan, and Brian Kulis.
\newblock {Kernel k-means: spectral clustering and normalized cuts}.
\newblock In \emph{Knowledge Discovery and Data Mining}, pages 551--556, 2004.
\newblock \doi{10.1145/1014052.1014118}.

\bibitem[Haasdonk and Burkhardt(2007)]{Haasdonk2007}
Bernard Haasdonk and Hans Burkhardt.
\newblock Invariant kernel functions for pattern analysis and machine learning.
\newblock \emph{Machine learning}, 68\penalty0 (1):\penalty0 35--61, 2007.

\bibitem[Hinton et~al.(2012)Hinton, Deng, Yu, Dahl, Mohamed, Jaitly, Senior,
  Vanhoucke, Nguyen, Sainath, et~al.]{speechref}
Geoffrey Hinton, Li~Deng, Dong Yu, George~E Dahl, Abdel-rahman Mohamed, Navdeep
  Jaitly, Andrew Senior, Vincent Vanhoucke, Patrick Nguyen, Tara~N Sainath,
  et~al.
\newblock Deep neural networks for acoustic modeling in speech recognition: The
  shared views of four research groups.
\newblock \emph{Signal Processing Magazine, IEEE}, 29\penalty0 (6):\penalty0
  82--97, 2012.

\bibitem[Hull(1994)]{USPSdata}
Jonathan~J. Hull.
\newblock {A Database for Handwritten Text Recognition Research}.
\newblock \emph{IEEE Transactions on Pattern Analysis and Machine
  Intelligence}, 16:\penalty0 550--554, 1994.
\newblock \doi{10.1109/34.291440}.

\bibitem[Hyv{\"a}rinen et~al.(2001)Hyv{\"a}rinen, Karhunen, and Oja]{HyKaOj01}
Aapo Hyv{\"a}rinen, Juha Karhunen, and Erkki Oja.
\newblock \emph{Independent Component Analysis}.
\newblock Wiley, 2001.

\bibitem[Jebara(2003)]{Jeb03b}
Tony Jebara.
\newblock Convex invariance learning.
\newblock In \emph{Artificial Intelligence and Statistics}, 2003.

\bibitem[Jenssen(2010)]{KECA2010}
Robert Jenssen.
\newblock Kernel entropy component analysis.
\newblock \emph{IEEE Transactions on Pattern Analysis and Machine
  Intelligence}, 32\penalty0 (5):\penalty0 847--860, 2010.

\bibitem[Jiang and Ching(2012)]{Jiang12}
Hao Jiang and Wai-Ki Ching.
\newblock Correlation kernels for support vector machines classification with
  applications in cancer data.
\newblock \emph{Computational and Mathematical Methods in Medicine}, 2012.
\newblock \doi{10.1155/2012/205025}.

\bibitem[Kondor(2008)]{Kondor08}
Risi Kondor.
\newblock \emph{Group Theoretical Methods in Machine Learning}.
\newblock PhD thesis, Columbia University, 2008.

\bibitem[Krizhevsky et~al.(2012)Krizhevsky, Sutskever, and Hinton]{Hinton}
Alex Krizhevsky, Ilya Sutskever, and Geoffrey~E. Hinton.
\newblock Imagenet classification with deep convolutional neural networks.
\newblock In F.~Pereira, C.J.C. Burges, L.~Bottou, and K.Q. Weinberger,
  editors, \emph{Advances in Neural Information Processing Systems 25}, pages
  1097--1105. Curran Associates, Inc., 2012.

\bibitem[LeCun et~al.(1998)LeCun, Bottou, Bengio, and Haffner]{LeNet}
Yann LeCun, L{\'e}on Bottou, Yoshua Bengio, and Patrick Haffner.
\newblock Gradient-based learning applied to document recognition.
\newblock \emph{Proceedings of the IEEE}, 86\penalty0 (11):\penalty0
  2278--2324, 1998.

\bibitem[Meila and Shi(2001)]{MeilaS00}
Marina Meila and Jianbo Shi.
\newblock Learning segmentation by random walks.
\newblock In \emph{Advances in Neural Information Processing Systems 13}, pages
  873--879. MIT Press, 2001.

\bibitem[Meinecke et~al.(2005)Meinecke, Harmeling, and M\"uller]{MeiHarMue04a}
Frank Meinecke, Stefan Harmeling, and Klaus-Robert M\"uller.
\newblock Inlier-based {ICA} with an application to super-imposed images.
\newblock \emph{International Journal of Imaging Systems and Technology
  (IJIST)}, 15\penalty0 (1):\penalty0 48--55, 2005.

\bibitem[Ng et~al.(2002)Ng, Jordan, and Weiss]{Ng02}
Andrew~Y. Ng, Michael~I. Jordan, and Yair Weiss.
\newblock On spectral clustering: Analysis and an algorithm.
\newblock In \emph{Advances in Neural Information Processing Systems 14}, pages
  849--856. MIT Press, 2002.

\bibitem[Sch{\"o}lkopf and Smola(2002)]{Scholkopf02}
Bernhard Sch{\"o}lkopf and Alexander~J Smola.
\newblock \emph{Learning with kernels}.
\newblock MIT Press, 2002.

\bibitem[Sch{\"o}lkopf et~al.(1997)Sch{\"o}lkopf, Simard, Vapnik, and
  Smola]{virtualSVMs}
Bernhard Sch{\"o}lkopf, Patrice Simard, Vladimir Vapnik, and Alexander Smola.
\newblock Improving the accuracy and speed of support vector machines.
\newblock \emph{Advances in neural information processing systems}, 9:\penalty0
  375--381, 1997.

\bibitem[Sch{\"o}lkopf et~al.(1998)Sch{\"o}lkopf, Smola, and
  M{\"u}ller]{KPCA1998}
Bernhard Sch{\"o}lkopf, Alexander Smola, and Klaus-Robert M{\"u}ller.
\newblock Nonlinear component analysis as a kernel eigenvalue problem.
\newblock \emph{Neural computation}, 10\penalty0 (5):\penalty0 1299--1319,
  1998.

\bibitem[Simard et~al.(1992)Simard, Victorri, LeCun, and
  Denker]{SimVicLeCDen92}
Patrice~Y. Simard, Bernard Victorri, Yann~A. LeCun, and John~S. Denker.
\newblock Tangent prop --- a formalism for specifying selected invariances in
  an adaptive network.
\newblock In J.E. Moody, S.J. Hanson, and R.P. Lippmann, editors,
  \emph{Advances in Neural Information Processing Systems}, volume~4, pages
  895--903, San Mateo, CA, 1992. Morgan Kaufmann.

\bibitem[Spadone et~al.(2012)Spadone, de~Pasquale, Mantini, and
  Penna]{Spadone2012}
Sara Spadone, Francesco de~Pasquale, Dante Mantini, and Stefania~Della Penna.
\newblock A k-means multivariate approach for clustering independent components
  from magnetoencephalographic data.
\newblock \emph{NeuroImage}, 62\penalty0 (3):\penalty0 1912 -- 1923, 2012.

\bibitem[Sugiyama et~al.(2014)Sugiyama, Niu, Yamada, Kimura, and
  Hachiya]{Sugiyama2014}
Masashi Sugiyama, Gang Niu, Makoto Yamada, Manabu Kimura, and Hirotaka Hachiya.
\newblock Information-maximization clustering based on squared-loss mutual
  information.
\newblock \emph{Neural Computation}, 26\penalty0 (1):\penalty0 84--131, 2014.

\bibitem[Totaro(2008)]{Totaro2008}
Burt Totaro.
\newblock Hilbert's 14th problem over finite fields and a conjecture on the
  cone of curves.
\newblock \emph{Compositio Mathematica}, 144\penalty0 (05):\penalty0
  1176--1198, 2008.

\bibitem[Vapnik(1995)]{Vapnik95}
Vladimir~N. Vapnik.
\newblock \emph{The Nature of Statistical Learning Theory}.
\newblock Springer-Verlag New York, Inc., New York, NY, USA, 1995.

\bibitem[von Luxburg(2007)]{Luxburg07}
Ulrike von Luxburg.
\newblock A tutorial on spectral clustering.
\newblock \emph{Statistics and Computing}, 17\penalty0 (4):\penalty0 395--416,
  2007.

\bibitem[Walder and Chapelle(2007)]{walder2007learning}
Christian Walder and Olivier Chapelle.
\newblock Learning with transformation invariant kernels.
\newblock In \emph{Advances in Neural Information Processing Systems}, pages
  1561--1568, 2007.

\bibitem[Zariski(1954)]{Zariski54}
Oscar Zariski.
\newblock Interpr{\'e}tations alg{\'e}brico-g{\'e}om{\'e}triques du
  quatorzi{\`e}me probl{\`e}me de {H}ilbert.
\newblock \emph{Bull. Sci. Math}, 78\penalty0 (2):\penalty0 155--168, 1954.

\bibitem[Zelnik-manor and Perona(2005)]{Perona2005}
Lihi Zelnik-manor and Pietro Perona.
\newblock Self-tuning spectral clustering.
\newblock In L.K. Saul, Y.~Weiss, and L.~Bottou, editors, \emph{Advances in
  Neural Information Processing Systems 17}, pages 1601--1608. MIT Press, 2005.

\bibitem[Zibulevsky and Pearlmutter(2001)]{ZibPea01}
Michael Zibulevsky and Barak~A. Pearlmutter.
\newblock Blind source separation by sparse decomposition in a signal
  dictionary.
\newblock \emph{Neural Computation}, 13:\penalty0 863--882, 2001.

\bibitem[Zien et~al.(2000)Zien, R{\"a}tsch, Mika, Sch{\"o}lkopf, Lengauer, and
  M{\"u}ller]{Zien}
Alexander Zien, Gunnar R{\"a}tsch, Sebastian Mika, Bernhard Sch{\"o}lkopf,
  Thomas Lengauer, and K-R M{\"u}ller.
\newblock Engineering support vector machine kernels that recognize translation
  initiation sites.
\newblock \emph{Bioinformatics}, 16\penalty0 (9):\penalty0 799--807, 2000.

\end{thebibliography}

\end{document}